\documentclass{article}

\usepackage{microtype}
\usepackage{graphicx}
\usepackage{subcaption}
\usepackage{booktabs} 

\usepackage{hyperref}

\usepackage[accepted]{icml2026}

\usepackage{amsmath}
\usepackage{amssymb}
\usepackage{mathtools}
\usepackage{amsthm}

\usepackage{multirow}
\usepackage{booktabs}
\usepackage{siunitx}
\usepackage{colortbl}

\newcommand{\dpos}[1]{\textcolor{green!45!black}{\scriptsize(#1)}}
\newcommand{\dneg}[1]{\textcolor{red!70!black}{\scriptsize(#1)}}
\newcommand{\dzer}[1]{\textcolor{black!55}{\scriptsize(#1)}}

\newcommand{\scpos}[2]{\multicolumn{1}{c}{\num{#1}\,\dpos{#2}}}
\newcommand{\scneg}[2]{\multicolumn{1}{c}{\num{#1}\,\dneg{#2}}}
\newcommand{\sczer}[2]{\multicolumn{1}{c}{\num{#1}\,\dzer{#2}}}

\usepackage[capitalize,noabbrev]{cleveref}

\theoremstyle{plain}
\newtheorem{theorem}{Theorem}[section]
\newtheorem{proposition}[theorem]{Proposition}
\newtheorem{lemma}[theorem]{Lemma}
\newtheorem{corollary}[theorem]{Corollary}
\theoremstyle{definition}

\theoremstyle{remark}

\usepackage[textsize=tiny]{todonotes}

\icmltitlerunning{Easier to Judge than to Find: Predicting In-Context Learning Success for Demonstration Selection}

\begin{document}

% The Geometry of In-Context Learning: Understanding and Predicting ICL Success through Representation Manifolds
\twocolumn[
  \icmltitle{Easier to Judge than to Find: Predicting In-Context Learning Success for Demonstration Selection}

  % It is OKAY to include author information, even for blind submissions: the
  % style file will automatically remove it for you unless you've provided
  % the [accepted] option to the icml2026 package.

  % List of affiliations: The first argument should be a (short) identifier you
  % will use later to specify author affiliations Academic affiliations
  % should list Department, University, City, Region, Country Industry
  % affiliations should list Company, City, Region, Country

  % You can specify symbols, otherwise they are numbered in order. Ideally, you
  % should not use this facility. Affiliations will be numbered in order of
  % appearance and this is the preferred way.
  \begin{icmlauthorlist}
    \icmlauthor{Haochun Wang}{hit}
    \icmlauthor{Chaofen Yang}{hit}
    \icmlauthor{Jiatong Liu}{hit}
    \icmlauthor{Jingbo Wang}{hit}
    \icmlauthor{Zewen Qiang}{hit}
    \icmlauthor{Sendong Zhao}{hit}

    \icmlauthor{Bing Qin}{hit}
    \icmlauthor{Ting Liu}{hit}
  \end{icmlauthorlist}

  \icmlaffiliation{hit}{Research Center for Social Computing and Interactive Robotics, Harbin Institute of Technology, China}

  \icmlcorrespondingauthor{Sendong Zhao}{sdzhao@ir.hit.edu.cn}

  % You may provide any keywords that you find helpful for describing your
  % paper; these are used to populate the "keywords" metadata in the PDF but
  % will not be shown in the document
  \icmlkeywords{Machine Learning, ICML}

  \vskip 0.3in
]

% this must go after the closing bracket ] following \twocolumn[ ...

% This command actually creates the footnote in the first column listing the
% affiliations and the copyright notice. The command takes one argument, which
% is text to display at the start of the footnote. The \icmlEqualContribution
% command is standard text for equal contribution. Remove it (just {}) if you
% do not need this facility.

% Use ONE of the following lines. DO NOT remove the command.
% If you have no special notice, KEEP empty braces:
\printAffiliationsAndNotice{}  % no special notice (required even if empty)
% Or, if applicable, use the standard equal contribution text:
% \printAffiliationsAndNotice{\icmlEqualContribution}

\begin{abstract}
  In-context learning (ICL) is highly sensitive to which demonstrations appear in the prompt, but selecting
them is expensive because the space of possible demonstration contexts and combinations is enormous. We argue that
demonstration selection is \emph{easier to judge than to find}: predicting whether a specific query--context
pair $(q,D)$ will succeed is cheaper and more general than searching for an optimal $D^\star$. Based on this
insight, we propose DiSP, a sample-and-judge framework that
stratifies queries by difficulty. DiSP runs random demonstration trials to estimate success rate of each training query, 
trains a lightweight router to predict difficulty from the query, and trains level-specific
judges for sampled demonstrations. At inference, DiSP performs stop-on-acceptance judging under an explicit
budget, emitting diagnostic risk tags when no suitable context is found. 
Across five classification datasets with Llama~3--8B and Qwen~2.5--7B, DiSP achieves the best
average accuracy, improving over strong learned selection baselines by up to 3.4\%, while achieving up to $23\times$ 
end-to-end wall-clock speedup.
\end{abstract}

	\section{Introduction}

Large language models (LLMs) can solve new tasks by conditioning on a few labeled input--output examples in
the prompt, a phenomenon known as \emph{in-context learning} (ICL)~\cite{DBLP:conf/nips/BrownMRSKDNSSAA20}.
Despite its practical impact, ICL is notoriously sensitive to demonstrations: which examples are chosen and
how they are ordered can change the prediction from correct to confidently wrong~\citep{lu-etal-2022-fantastically}.
This sensitivity makes demonstration selection inherently combinatorial. Given a candidate pool of size $N$,
selecting $k$ demonstrations and ordering them yields $\binom{N}{k}k!$ possible demonstrations. Brute-force
evaluation is typically infeasible because each candidate demonstration requires a full LLM inference, while real
deployments often cannot afford running the LLM over many candidate demonstrations.

\begin{figure}[t]
\centering
\includegraphics[width=0.96\linewidth]{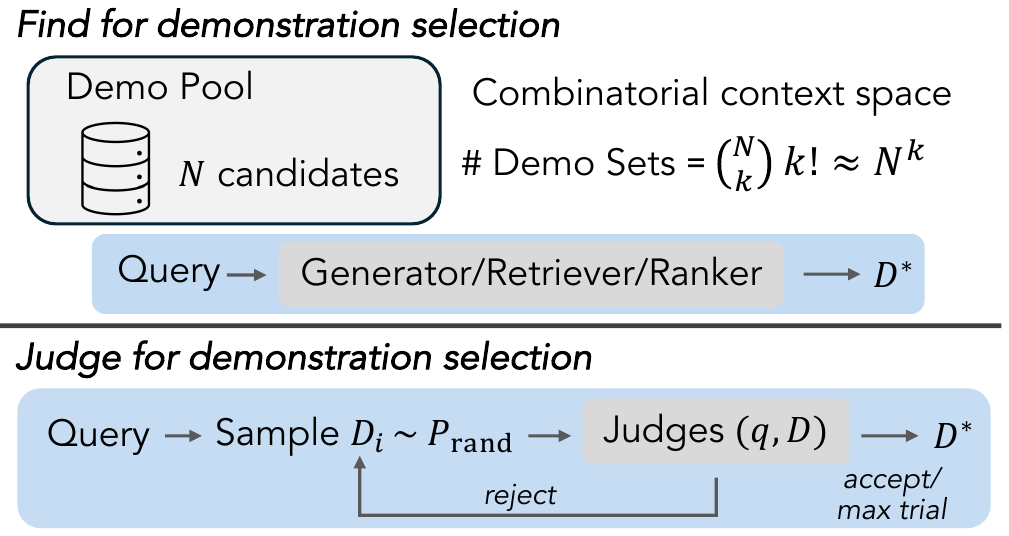}
\caption{Finding vs.\ judging for demonstration selection. Searching for an optimal $D^\star$ faces a
combinatorial space, while judging enables efficient sample-and-test with
stop-on-acceptance under an explicit budget.}
\label{fig:judge_vs_find}
\end{figure}

A common approach treats demonstration selection as an instance-adaptive \emph{search} problem: map a query
$q$ to a high-performing demonstration $D^\star$ via heuristic retrieval, learned retrieval/ranking, or proxy
scoring with model feedback~\citep{rubin-etal-2022-learning,iter-etal-2023-context,nguyen2023context,ji2025learning,zhang2025learning}.
Recent prompt optimization systems further treat demonstrations (and instructions) as tunable components in
larger pipelines~\citep{khattab2023dspy,opsahl2024optimizing}. These methods can be effective, but they
highlight a practical tension. Reliable search would ideally validate many candidates by running the LLM on
$K$ candidate demonstrations, yet these additional LLM calls consume the very resource we aim to save.
Moreover, proxy
signals are imperfect---even semantically reasonable demonstrations can still cause failures
and the same demonstration can behave differently across queries.

We propose an alternative framing: demonstration selection is \emph{easier to judge than to find}. Instead
of directly searching for $D^\star$, we learn a success predictor that answers a simpler
question: \emph{given a specific pair $(q,D)$, will ICL succeed?} For a frozen LLM $M$, define the success
indicator
\begin{equation}
  s(q,D)\;=\;\mathbb{I}\!\left[M(q;D)\ \text{is correct}\right],
\end{equation}
and learn a predictor $g(q,D)\approx P\!\left(s(q,D)=1\mid q,D\right)$ that is orders of magnitude cheaper to
judge than running $M$. This judge enables a \emph{sample-and-judge} strategy: sample candidate
	demonstrations from a fixed proposal distribution, evaluate them with $g$, then query $M$ with an accepted
	demonstration. 
	
	Building on this view, we introduce DiSP (\textbf{Di}fficulty-Stratified \textbf{S}uccess
	\textbf{P}rediction), a judge-centric framework that allocates compute for context selection only when it
	is likely to pay off. Unlike search-centric pipelines that train task-specific retrievers/rankers, 
  DiSP keeps $M$ frozen and trains only compact
	discriminative modules. At inference, computation is a bounded number of cheap judge evaluations, yielding
	substantial wall-clock savings in test-time demonstration selection.
	
	Crucially, the value of spending compute on demonstration selection is highly query dependent. Under a fixed
	proposal distribution over demonstration, some queries succeed for most sampled contexts, some are
	brittle and succeed only for a small fraction, and others are unlikely to be solved within any practical
	sampling budget. DiSP makes this heterogeneity explicit by estimating the success rate of each training query 
	under random demonstration trials and stratifying queries into four difficulty levels ($l_1/l_2/l_3/l_x$),
	corresponding to distinct compute regimes under the fixed proposal.
	
Given these strata, we train (i) a lightweight router $r(q)$ that predicts the difficulty level from the
query alone and (ii) level-specific judges that vet $(q,D)$ pairs. Judge capacity increases from $l_1$ to
$l_3$ so that expensive features are reserved for brittle queries. At inference, the router selects a
per-level sampling budget and judge to run stop-on-acceptance feasibility testing over random
demonstrations; if $\hat\ell=l_x$, we skip judging, query the LLM once with a random demonstration, and emit
	\textsc{HARD\_QUERY}. If no demonstration is accepted within budget, we fall back to a random demonstration
	and emit a \textsc{NO\_GOOD\_DEMO} tag. The sampling budgets provide an explicit accuracy--cost knob, while
	the risk tags expose when random-context ICL is likely to be unreliable.
	Empirically, on five classification benchmarks with two open-source backbones, DiSP improves average
	accuracy over strong selection baselines while reducing end-to-end wall-clock cost by up to $23\times$.

	Our contributions are as follows:
	\begin{itemize}
	  \item \textbf{A judge-centric sample-and-judge framework for demonstration selection.} We argue that
  predicting ICL success for a given $(q,D)$ is easier than searching for $D^\star$, and introduce DiSP with a
  router, level-specific judges, and stop-on-acceptance feasibility testing that provides explicit cost
  knobs and diagnostic risk tags.
  \item \textbf{Random-trial supervision and query stratification.} We use offline random demonstration
  trials with the target LLM to estimate per-query success rates and define four difficulty levels that
  supervise routing and level-specific judging.
  \item \textbf{Efficiency and adaptability.} Across five classification benchmarks with Llama~3--8B and Qwen~2.5--7B, DiSP
  improves over strong learned selection baselines by up to 3.4\% while achieving up to $23\times$ end-to-end 
  wall-clock speedup.
\end{itemize}

	\section{Related Work}
\label{sec:relatedworks}

\begin{figure*}[t]
\centering
\includegraphics[width=0.98\textwidth]{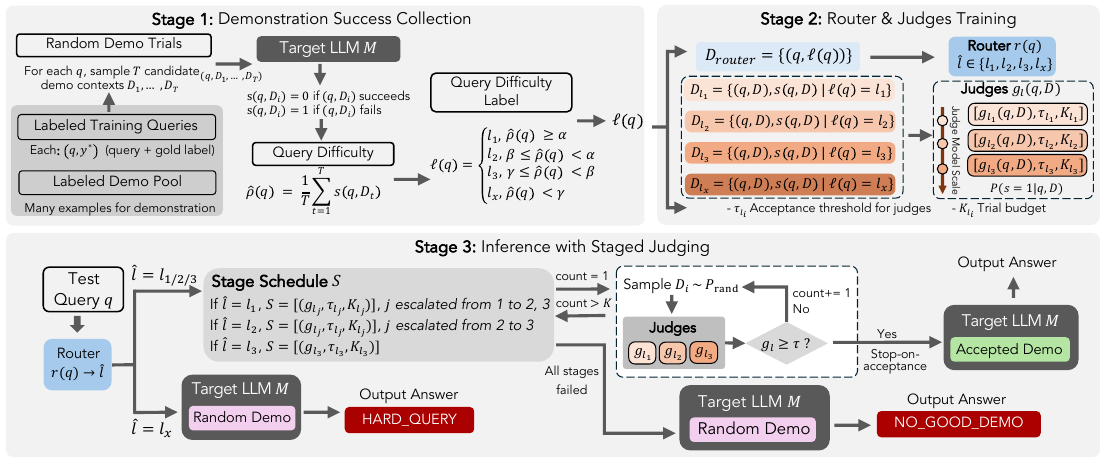}
\caption{Overview of DiSP. \textbf{Stage 1:} run the target LLM on each training query under multiple
random $k$-shot contexts to label success and estimate an empirical success rate for difficulty
stratification. \textbf{Stage 2:} train a router and level-specific judges to predict success for a given
$(q,D)$ pair. \textbf{Stage 3:} at test time, route each query and apply stop-on-acceptance sample-and-judge
over sampled contexts up to a budget, emitting risk tags and falling back to a random context when needed.}
\label{fig:method}
\end{figure*}

\subsection{Mechanisms of In-Context Learning}
Existing work studies how pre-trained Transformers perform new tasks from a few examples, including
explanations based on implicit optimization~\cite{von2023transformers,shen2024position,mittal2025iterative},
implicit Bayesian inference~\cite{xieexplanation}, and mechanistic circuits such as induction heads that
enable pattern completion~\cite{olsson2022context}. Our focus is complementary: rather than explaining
\emph{why} ICL works, we build practical predictors for \emph{when} a particular prompt context will work.

\subsection{Demonstration Selection and Prompt Optimization}
Most demonstration selection methods aim to \emph{find} a good context $D^\star$ for a query $q$. Early
approaches select demonstrations via heuristic similarity (e.g., BM25 or embedding $k$NN), and study what
makes an example effective for ICL~\citep{liu2022makes}. Learned retrievers and rankers train
instance-adaptive selection policies from labeled data~\citep{rubin-etal-2022-learning,ji2025learning}. A
complementary line performs model-based scoring of candidate demonstrations, using proxy criteria such as
cross-entropy difference, influence estimates, or active selection to identify informative contexts
\citep{iter-etal-2023-context,nguyen2023context,zhang-etal-2022-active}. More recently, LLM feedback and
preferences have been used to guide demonstration selection~\citep{zhang2025learning}. Prompt
engineering has also evolved toward programmatic optimization, where demonstrations and instructions are
treated as tunable components in larger LLM programs~\cite{zhou2022ape,khattab2023dspy,opsahl2024optimizing}.
Orthogonal to \emph{which} demonstrations are chosen, order sensitivity is a persistent challenge; methods
such as Batch-ICL reduce order effects by decoupling demonstrations into independent forward passes
\citep{zhang2024batch}.

\subsection{Success Prediction, Routing, and Reliability}
Most demonstration selection methods above aim to \emph{find} a good context $D^\star$ for each query. In
contrast, our focus is on \emph{judging}: given a specific pair $(q,D)$, we predict whether ICL will succeed,
turning selection into feasibility testing over proposed candidates. More broadly, an emerging perspective is
to \emph{judge} whether a model will succeed for a \emph{given} input--context pair (e.g., $(q,D)$ in ICL),
and to use such predictions to screen or prioritize candidate prompts under strict inference budgets. This
is closely related to budgeted inference via routing/model cascades, where additional computation is
allocated only when needed. Reliability is also a central concern: conformal prediction has been explored
for uncertainty quantification in LLM settings~\citep{xu2025tecp,vishwakarmaprune}, providing principled
abstention mechanisms complementary to context selection.

	\section{Difficulty-Stratified Success Prediction for In-Context Learning}
\label{sec:method}

In this section, we propose DiSP (\textbf{Di}fficulty-Stratified \textbf{S}uccess \textbf{P}rediction), a framework for ICL under an inference
budget. Rather than learning a task-specific retriever/ranker to find demonstrations, we learn efficient
difficulty-stratified success predictors that \emph{judge} whether a sampled context will work, as illustrated in Figure~\ref{fig:method}.

\subsection{Problem Setup}
Let $M$ be a frozen LLM and let $\mathcal{Y}$ be a finite label set rendered as short answer strings. For a
query $q$ with gold label $y^\star\in\mathcal{Y}$ and a $k$-shot demonstration \emph{sequence}
$D=(d_1,\dots,d_k)$ sampled from a labeled pool $\mathcal{P}$, where each $d_i=(x_i,y_i)$, we form a prompt
$\pi(q,D)$ by concatenating demonstrations in the given order followed by the query. The model predicts
\begin{equation}
  \hat y(q,D)\;=\;\arg\max_{y\in\mathcal{Y}} \log p_M\!\left(y \mid \pi(q,D)\right),
  \label{eq:prediction_rule}
\end{equation}
and we define the ICL success indicator
\begin{equation}
  s(q,D)\;=\;\mathbb{I}\!\left[\hat y(q,D)=y^\star\right].
  \label{eq:success_label}
\end{equation}

\subsection{Random Demo Trials and Query Difficulty}
\label{sec:method:difficulty}
Our method does not design a demonstration generator or retriever. Instead, we assume a random distribution over class-balanced demonstration contexts.
For classification, we set $k=\lvert\mathcal{Y}\rvert$ and sample one labeled example per class uniformly
from the pool, then randomize the order. This induces a proposal distribution
$D\sim\mathcal{P}_{\mathrm{rand}}$.
%\footnote{The key requirement is that the same proposal distribution is used for offline labeling and for deployment; changing the proposal (e.g., to BM25 retrieval) changes the distribution of $(q,D)$ pairs and generally requires re-collecting supervision or re-calibrating the judge.}

Using the LLM as an oracle during demonstration data construction, we estimate the empirical ICL success
rate under random contexts:
\begin{equation}
  \hat\rho(q)\;=\;\frac{1}{T}\sum_{t=1}^{T} s(q,D_t).
  \label{eq:rho_hat}
\end{equation}
where $T$ is the number of random demo trials used to compute $\hat\rho(q)$, a finite-sample approximation to the (unknown) true success rate $\rho(q)$ of query $q$ under $\mathcal{P}_{\mathrm{rand}}$.
We then assign each query to one of four difficulty levels $\ell(q)\in\{l_1,l_2,l_3,l_x\}$:
\begin{equation}
  \ell(q)\;=\;
  \begin{cases}
    l_1, & \hat\rho(q)\ge \alpha,\\
    l_2, & \beta\le \hat\rho(q)<\alpha,\\
    l_3, & \gamma\le \hat\rho(q)<\beta,\\
    l_x, & \hat\rho(q)<\gamma,
  \end{cases}
  \label{eq:level_def}
\end{equation}
where $(\alpha,\beta,\gamma)$ are thresholds with $\alpha>\beta>\gamma$.

Intuitively, $l_1$ queries are easy where random trials succeed frequently, $l_2$ queries benefit from a small amount of context trials, $l_3$ queries are brittle and require larger sampling budgets, and $l_x$ queries are unlikely to be solved by random demonstrations within a practical budget.
% Note that at inference we always query the LLM with a $k$-shot context; $\hat\rho(q)$ is used only for difficulty stratification.

\paragraph{Why four levels and level-specific judges?}
The strata are a deployment-motivated discretization of the random-context success rate under the fixed
proposal distribution. They correspond to distinct compute regimes: for $l_1$, a workable context is common
and there is little value in spending large budgets; for $l_2/l_3$, workable contexts exist but are rarer and
selection under a limited budget matters; for $l_x$, the success rate is so small that additional sampling
and judging is typically inefficient. We therefore train judges only for $l_1/l_2/l_3$ and treat $l_x$ as an
explicit risk stratum handled by a single-call fallback with a \textsc{HARD\_QUERY} tag. Across $l_1/l_2/l_3$,
we use increasing judge capacity so that expensive judging is invoked only for brittle queries, reducing the
average inference overhead compared to a single worst-case judge used for every query.

\paragraph{A coverage view for selecting $K$ for inference.}
With the random-context success rate
$\hat\rho(q)$, if we sample $K$ independent demo contexts
$D_1,\dots,D_K\sim \mathcal{P}_{\mathrm{rand}}$, then the demo set contains at least one successful
context with probability $1-(1-\hat\rho(q))^K$. This basic fact motivates level-dependent budgets
$K_{l_1}<K_{l_2}<K_{l_3}$: for queries with larger $\hat\rho(q)$, a small $K$ already makes a good context likely to
exist, whereas brittle queries require larger $K$ to reliably include a workable context. We develop this
view and its implications for budgeting in \Cref{app:theory:coverage} and empirically study the
$K$--accuracy trade-off in \Cref{sec:experiments:tradeoff}.
Although $\hat\rho(q)$ is estimated from only $T$ Bernoulli trials, standard concentration bounds imply that
level assignments are stable away from the thresholds $(\alpha,\beta,\gamma)$; see \Cref{app:theory:concentration}.

\paragraph{Judging as feasibility testing.}
At inference, the judge is used as a \emph{feasibility test} rather than a ranker. For each sampled demo context $D_i$, we compute a judge score $g_{\ell}(q,D_i)$ and compare it to a threshold $\tau_{\ell}$: if
$g_{\ell}(q,D_i)\ge \tau_{\ell}$, we accept the demo and stop (stop-on-acceptance); otherwise we reject
it and keep sampling, up to the budget $K_{\ell}$. If no candidate passes the test within the allotted
trials, we emit the \textsc{NO\_GOOD\_DEMO} tag and query the LLM with a randomly sampled demo.

Judging is feasible because successful and failed demo contexts $(q,D)$ exhibit separable structure in
representation space. In \Cref{sec:experiments:probes}, we provide empirical evidence by training lightweight
hidden-state probes: a simple MLP applied to the LLM hidden representations can distinguish
successful from failed ICL trials with strong AUROC/AUPRC. This motivates learning compact proxy judges that predict
feasibility from $(q,D)$ without the target LLMs.
Under a bounded score error
assumption, stop-on-acceptance guarantees that accepted
contexts have true success probability at least $\tau_{\ell}$ up to the error tolerance, and
\textsc{NO\_GOOD\_DEMO} is sound on the sampled candidate set (\Cref{lem:nogooddemo_sound,lem:early_stop}).
Formal statements are given in Appendix~\ref{app:theory:judging}.

\subsection{Training Data Construction}
For each training query $q$, we collect the tuple $\bigl(q, \{(D_t,s(q,D_t))\}_{t=1}^T\bigr)$ and compute
$\ell(q)$ by \eqref{eq:level_def}. Concretely, for each $q$ we sample $T$ random contexts
$D_1,\dots,D_T\sim\mathcal{P}_{\mathrm{rand}}$, run the LLM to obtain $\{s(q,D_t)\}_{t=1}^T$, compute
$\hat\rho(q)$ by \eqref{eq:rho_hat}, and assign $\ell(q)\in\{l_1,l_2,l_3,l_x\}$ by thresholds. 
% We run this labeling procedure separately for each target LLM backbone $M$.

This yields four supervised datasets:
\begin{align}
  \mathcal{D}_{\mathrm{route}} &= \{(q,\ell(q))\}, \\
  \mathcal{D}_{l_1} &= \{((q,D), s(q,D)) : \ell(q)=l_1\}, \\
  \mathcal{D}_{l_2} &= \{((q,D), s(q,D)) : \ell(q)=l_2\}, \\
  \mathcal{D}_{l_3} &= \{((q,D), s(q,D)) : \ell(q)=l_3\}.
\end{align}
We train a \emph{router} on $\mathcal{D}_{\mathrm{route}}$ and train level-specific \emph{judges} on
$\mathcal{D}_{l_1}$, $\mathcal{D}_{l_2}$ and $\mathcal{D}_{l_3}$.

\subsection{Router and Level-Specific Judges}
\paragraph{Router $r(q)$.}
The router predicts $\hat\ell=r(q)$ from the query text alone via four-way classification
($l_1/l_2/l_3/l_x$). This implements the principle ``look at $q$ first'': $\hat\ell$ determines whether
to apply judging and, when applicable, which level-specific judge $g_{\hat\ell}$ to use.

\paragraph{Judges $g_{l}(q,D)$.}
The judges predict a success probability for a specific pair $(q,D)$:
\begin{equation}
  g_{\ell}(q,D)\;\approx\;P\!\left(s(q,D)=1 \mid q,D\right),\quad \ell\in\{l_1,l_2,l_3\}.
\end{equation}
Both router and judges are encoders with classification heads.
% The router is implemented as a base-size text encoder with a 4-way classification head over $\{l_1,l_2,l_3,l_x\}$.
% The judges all use binary classification heads but differ in their feature extractors:
We use a base-size encoder for $g_{l_1}$, a larger one for $g_{l_2}$, and for $g_{l_3}$ we attach a
lightweight head on top of representations from the target LLMs, matching the increased brittleness of $l_3$ contexts (details in \Cref{sec:experiments}).
This keeps judging cheap on the many easy queries while reserving the most expensive judge features for the
brittle stratum where they matter most.

% We defer exact backbone choices to \Cref{sec:experiments}.
\begin{algorithm}[htbp]
\caption{DiSP with difficulty-stratified judging}
\label{alg:single_call_inference}
\begin{algorithmic}[1]
\REQUIRE Query $q$; LLM $M$; demo pool $\mathcal{P}$; budgets $K_{l_1},K_{l_2},K_{l_3}$; thresholds $\tau_{l_1},\tau_{l_2},\tau_{l_3}$
\STATE $\hat\ell \leftarrow r(q)$
\IF{$\hat\ell = l_x$}
  \STATE Sample $D\sim \mathcal{P}_{\mathrm{rand}}$ from $\mathcal{P}$
  \STATE \textbf{return} $M\!\left(\pi(q,D)\right)$ with tag \textsc{HARD\_QUERY}
\ELSE
  \FOR{$i=1$ to $K_{\hat\ell}$}
    \STATE Sample $D_i\sim \mathcal{P}_{\mathrm{rand}}$ from $\mathcal{P}$
    \STATE $p_i \leftarrow g_{\hat\ell}(q,D_i)$
    \IF{$p_i \ge \tau_{\hat\ell}$}
      \STATE \textbf{return} $M\!\left(\pi(q,D_i)\right)$ \COMMENT{stop-on-accept}
    \ENDIF
  \ENDFOR
  \STATE Sample $D\sim \mathcal{P}_{\mathrm{rand}}$ from $\mathcal{P}$
  \STATE \textbf{return} $M\!\left(\pi(q,D)\right)$ with tag \textsc{NO\_GOOD\_DEMO}
\ENDIF
\end{algorithmic}
\end{algorithm}

\subsection{Test-Time Workflow for Each Query}
\label{sec:method:inference}
As illustrated in \cref{fig:method}, we first route the query with $\hat\ell\leftarrow r(q)$. If
$\hat\ell=l_x$, we query the LLM once with a randomly sampled context and emit \textsc{HARD\_QUERY}.
Otherwise ($\hat\ell\in\{l_1,l_2,l_3\}$), we run stop-on-acceptance with judge $g_{\hat\ell}$, threshold
$\tau_{\hat\ell}$, and budget $K_{\hat\ell}$; if no candidate passes within budget, it turns to a random
context and emit \textsc{NO\_GOOD\_DEMO} (see \Cref{app:theory:coverage,app:theory:judging} for a motivating
analysis). \Cref{alg:single_call_inference} summarizes the procedure.
At inference, additional computation comes from the lightweight router/judges. The sampling budgets $K_{\ell}$ provide a direct accuracy--cost knob, and the
level-dependent design allocates more compute only to queries in need.

	\section{Experiments}
\label{sec:experiments}
\subsection{Experimental Setup}
\paragraph{Datasets and Backbones.}
We evaluate on five text classification benchmarks spanning question classification, sentiment analysis,
topic classification, and natural language inference: (1) \textsc{TREC}~\citep{li2002learning} (6-way question
type classification), (2) \textsc{SST-2} and (3) \textsc{SST-5} from the Stanford Sentiment Treebank
\citep{socher2013recursive}, (4) \textsc{AGNews}~\citep{zhang2015character} (4-way news topic), and
(5) \textsc{MNLI}~\citep{williams2018multinli} (3-way natural language inference). We experiment with two open-source backbones: Llama~3--8B~\citep{meta2024llama3} and
Qwen~2.5--7B~\citep{qwen2024qwen25}. We use $k$-shot ICL with $k$ set to the number of classes (one demonstration per label) for all classification
tasks. Prompt templates and label verbalizers are dataset-specific and provided in Appendix~\ref{app:prompts}.

\paragraph{Baselines.}
We compare against (1) zero-shot (no demonstrations) and the few-shot baselines, including
(2) a single random demonstration context (Random), (3) a retrieval-based context (BM25), and recent demonstration selection
baselines (4) Uprise~\citep{cheng-etal-2023-uprise}, (5) Se$^2$~\citep{liu-etal-2024-se2}, and
(6) SeDPO~\citep{ji2025learning}. For fair comparisons, each method selects a single context per query.

\paragraph{Router and judges.}
As in \Cref{sec:method:difficulty}, DiSP trains router and judges to predict the
difficulty level of $q$ and success outcome of $(q,D)$ pairs.
We implement the router $r(q)$ with a BERT-base ($\sim$110M parameters). For judges, we use (i) a
BERT-base ($\sim$110M) for $g_{l_1}$, (ii) a RoBERTa-Large
($\sim$330M) for $g_{l_2}$, and (iii) a lightweight classification head over representations from the target
LLM for $g_{l_3}$. In all cases, judges use binary classification heads and the target LLM is kept frozen.
Details of training and all hyperparameters are provided in Appendix~\ref{app:hyperparams}.

\paragraph{Evaluation metrics.}
We report accuracy on each test set compared with baselines.
To quantify efficiency, we report \emph{GPU time} using NVIDIA A100 GPUs for all methods.
We report (i) training cost, including supervision collection and parameter training, and (ii) test cost,
including demonstration selection and the final target-LLM inference.

\subsection{Main Results}
\label{sec:experiments:main_results}
\paragraph{Accuracy.}
\Cref{tab:icl_results} reports accuracy on five datasets with two backbone LLMs. Across both backbones, DiSP achieves the best average accuracy, for example, 77.6\% (+11.2 over zero-shot) on
LLaMA3-8B and 82.9\% (+13.1) on Qwen2.5-7B. The gains are largest on harder datasets where demonstrations and
selection matter most (e.g., TREC and MNLI), while improvements are modest on already-easy settings such as
SST-2 (zero-shot is $\ge$94.7\%).
Compared with strong selection baselines (BM25/Uprise/Se$^2$/SeDPO), DiSP improves by 2.8\%
over the strongest baseline on LLaMA3-8B and by 2.4\% points on Qwen2.5-7B,
while remaining competitive on the few settings where a baseline is slightly better (e.g., SST-5 on Qwen2.5-7B).

Notably, DiSP achieves these gains while using a fixed random proposal distribution and a single inference
pipeline across datasets, without training task-specific retrievers or rankers. This suggests that judging as a
feasibility test can capture a large fraction of the benefit of learned selection with substantially less
specialization.
Since DiSP involves random sampling of demonstrations, results in \cref{tab:icl_results} are reported as the
mean over 6 independent runs.

\begin{table*}[t]
\centering
\small
\setlength{\tabcolsep}{6pt}
\sisetup{
  table-number-alignment=center,
  table-format=2.1,
  round-mode=places,
  round-precision=1 
}
\caption{Performance (\%) on selected classification benchmarks with various demonstration selection methods. \textbf{Bold} indicates the best result.}
\label{tab:icl_results}
\begin{tabular}{ll
                S[table-format=2.1]
                S[table-format=2.1]
                S[table-format=2.1]
                S[table-format=2.1]
                S[table-format=2.1]
               |S[table-format=2.1]}
\toprule
\multicolumn{2}{c}{\textbf{Model / Method}} & \multicolumn{6}{c}{\textbf{Datasets}} \\
\cmidrule(lr){3-8}
 &  & \textbf{TREC} & \textbf{SST-2} & \textbf{SST-5} & \textbf{AGNews} & \textbf{MNLI} & \textbf{Avg} \\
\midrule

\multicolumn{8}{l}{\textbf{LLaMA3-8B}} \\
\midrule
\rowcolor{gray!18}
\multicolumn{2}{l}{Zero-shot} & 70.5 & 94.7 & 41.7 & 72.9 & 52.3 & 66.4 \\
\multirow{6}{*}{Few-shot}
 & Random
 & \scpos{73.8}{+3.3} & \scpos{94.9}{+0.2} & \scpos{46.6}{+4.9}
 & \scpos{84.1}{+11.2} & \scpos{66.9}{+14.6} & \scpos{73.3}{+6.9} \\

 & BM25
 & \scpos{75.2}{+4.7} & \sczer{94.7}{+0.0} & \scpos{52.3}{+10.6}
 & \scpos{86.1}{+13.2} & \scpos{65.6}{+13.3} & \scpos{74.8}{+8.4} \\

& Uprise~(\citeauthor{cheng-etal-2023-uprise}~\citeyear{cheng-etal-2023-uprise})
 & \scpos{75.8}{+5.3} & \scneg{94.0}{-0.7} & \scpos{48.3}{+6.6}
 & \multicolumn{1}{c}{\textbf{90.7}\,\dpos{+17.8}}
 & \scpos{58.9}{+6.6} & \scpos{73.6}{+7.2} \\
 
 & Se$^2$ \ \ \ \ \ ~(\citeauthor{liu-etal-2024-se2}~\citeyear{liu-etal-2024-se2})   
 & \scneg{68.2}{-2.3} & \scneg{93.4}{-1.3} & \scpos{51.7}{+10.0}
 & \scpos{86.1}{+13.2} & \scpos{60.9}{+8.6} & \scpos{72.1}{+5.6} \\

 & SeDPO~(\citeauthor{ji2025learning}~\citeyear{ji2025learning})
  & \scneg{67.5}{-3.0} & \scneg{93.4}{-1.3} & \scpos{47.0}{+5.3}
 & \scpos{88.1}{+15.2} & \scpos{61.6}{+9.3} & \scpos{71.5}{+5.1} \\
 
 & \textbf{DiSP} (Ours)  
 & \multicolumn{1}{c}{\textbf{79.2}\,\dpos{+8.7}}
 & \multicolumn{1}{c}{\textbf{95.4}\,\dpos{+0.7}}
 & \multicolumn{1}{c}{\textbf{54.3}\,\dpos{+12.6}}
 & \multicolumn{1}{c}{89.7\,\dpos{+16.8}}
 & \multicolumn{1}{c}{\textbf{69.5}\,\dpos{+17.2}}
 & \multicolumn{1}{c}{\textbf{77.6}\,\dpos{+11.2}} \\

\midrule

\multicolumn{8}{l}{\textbf{Qwen2.5-7B}} \\
\midrule
\rowcolor{gray!18}
\multicolumn{2}{l}{Zero-shot} & 63.1 & 95.4 & 42.4 & 71.5 & 76.8 & 69.8 \\
\multirow{6}{*}{Few-shot}
 & Random
 & \scpos{80.3}{+17.2} & \scneg{93.4}{-2.0} & \scpos{49.7}{+7.3}
 & \scpos{80.4}{+8.9} & \scpos{85.0}{+8.2} & \scpos{77.8}{+8.0} \\

 & BM25
 & \scpos{83.9}{+20.8} & \scneg{92.7}{-2.7}
 & \multicolumn{1}{c}{\textbf{55.6}\,\dpos{+13.2}}
 & \scpos{84.1}{+12.6} & \scpos{86.1}{+9.3} & \scpos{80.5}{+10.7} \\

 & Uprise~(\citeauthor{cheng-etal-2023-uprise}~\citeyear{cheng-etal-2023-uprise})
 & \scpos{83.2}{+20.1} & \scneg{94.7}{-0.7} & \scpos{54.3}{+11.9}
 & \multicolumn{1}{c}{\textbf{89.4}\,\dpos{+17.9}}
 & \scpos{80.1}{+3.3} & \scpos{80.3}{+10.5} \\

 & Se$^2$ \ \ \ \ \ ~(\citeauthor{liu-etal-2024-se2}~\citeyear{liu-etal-2024-se2})  
 & \scpos{82.1}{+19.0} & \scneg{94.7}{-0.7} & \scpos{53.6}{+11.2}
 & \scpos{88.1}{+16.6} & \scpos{82.1}{+5.3} & \scpos{80.1}{+10.3} \\

 & SeDPO~(\citeauthor{ji2025learning}~\citeyear{ji2025learning})
& \scpos{82.1}{+19.0}
& \multicolumn{1}{c}{\textbf{96.0}\,\dpos{+0.6}}
& \scpos{45.7}{+3.3}
 & \scpos{87.4}{+15.9} & \scpos{80.8}{+4.0} & \scpos{78.4}{+8.6} \\

 & \textbf{DiSP} (Ours)
 & \multicolumn{1}{c}{\textbf{87.3}\,\dpos{+24.2}} 
 & \multicolumn{1}{c}{\textbf{96.0}\,\dpos{+0.6}}  
 & \multicolumn{1}{c}{54.3\,\dpos{+11.9}}
 & \multicolumn{1}{c}{\textbf{89.4}\,\dpos{+17.9}} 
 & \multicolumn{1}{c}{\textbf{87.4}\,\dpos{+10.6}} 
 & \multicolumn{1}{c}{\textbf{82.9}\,\dpos{+13.1}} \\ 

\bottomrule
\end{tabular}
\end{table*}

\paragraph{Efficiency and wall-clock cost.}
Beyond accuracy, we compare end-to-end efficiency against baselines. \Cref{tab:efficiency} summarizes wall-clock cost averaged over five datasets. DiSP is consistently lightweight: its training pipeline
takes 6.7 minutes on average, versus 13.4 minutes for Uprise (2.0$\times$), 114.9 minutes for Se$^2$
(17.1$\times$), and 157.1 minutes for SeDPO (23.4$\times$). At test time, DiSP adds minimal overhead (0.1
minutes), while methods that rely on retrieval and/or heavier offline artifacts are 4--6$\times$ slower on
average. Overall, DiSP reduces total cost to 6.8 minutes, compared to 13.8 (Uprise), 115.4 (Se$^2$), and 157.6
(SeDPO), highlighting that the sample-and-test pipeline can improve accuracy while remaining inexpensive to run. The complete per-dataset timing breakdown is provided in Appendix~\ref{app:timing_baselines}.

\begin{table}[t]
\centering
\small
\caption{Wall-clock cost averaged over five datasets.}
\label{tab:efficiency}
\begin{tabular}{lrr}
\toprule
\textbf{Method} & \textbf{Time (min)} & \textbf{Relative (vs. DiSP)} \\
\midrule
\multicolumn{3}{l}{\textit{Training cost}} \\
\quad Uprise & 13.4 & \cellcolor{red!7}2.0$\times$ \\
\quad Se$^2$ & 114.9 & \cellcolor{red!28}17.1$\times$ \\
\quad SeDPO & 157.1 & \cellcolor{red!35}23.4$\times$ \\
\quad DiSP & 6.7 & \cellcolor{green!15}1.0$\times$ \\
\midrule
\multicolumn{3}{l}{\textit{Test cost}} \\
\quad Uprise & 0.4 & \cellcolor{red!10}4.0$\times$ \\
\quad Se$^2$ & 0.6 & \cellcolor{red!15}6.0$\times$ \\
\quad SeDPO & 0.6 & \cellcolor{red!15}6.0$\times$ \\
\quad DiSP & 0.1 & \cellcolor{green!15}1.0$\times$ \\
\midrule
\multicolumn{3}{l}{\textit{Total cost}} \\
\quad Uprise & 13.8 & \cellcolor{red!7}2.0$\times$ \\
\quad Se$^2$ & 115.4 & \cellcolor{red!28}17.0$\times$ \\
\quad SeDPO & 157.6 & \cellcolor{red!35}23.2$\times$ \\
\quad DiSP & 6.8 & \cellcolor{green!15}1.0$\times$ \\
\bottomrule
\end{tabular}
\end{table}

\subsection{Separability Evidence from Representation Probes} 
\label{sec:experiments:probes}
This study is motivated by a simple question: \emph{can a classifier distinguish
successful from failed ICL trials from the specific representation of a $(q,D)$ pair?}
Concretely, we extract last-layer hidden-state vectors for prompt instances $x=(q,D)$ from target LLMs, label
them with the success indicator $s(q,D)$, and train an MLP probe to predict success.

\begin{figure}[htbp]
\centering
\includegraphics[width=0.99\linewidth]{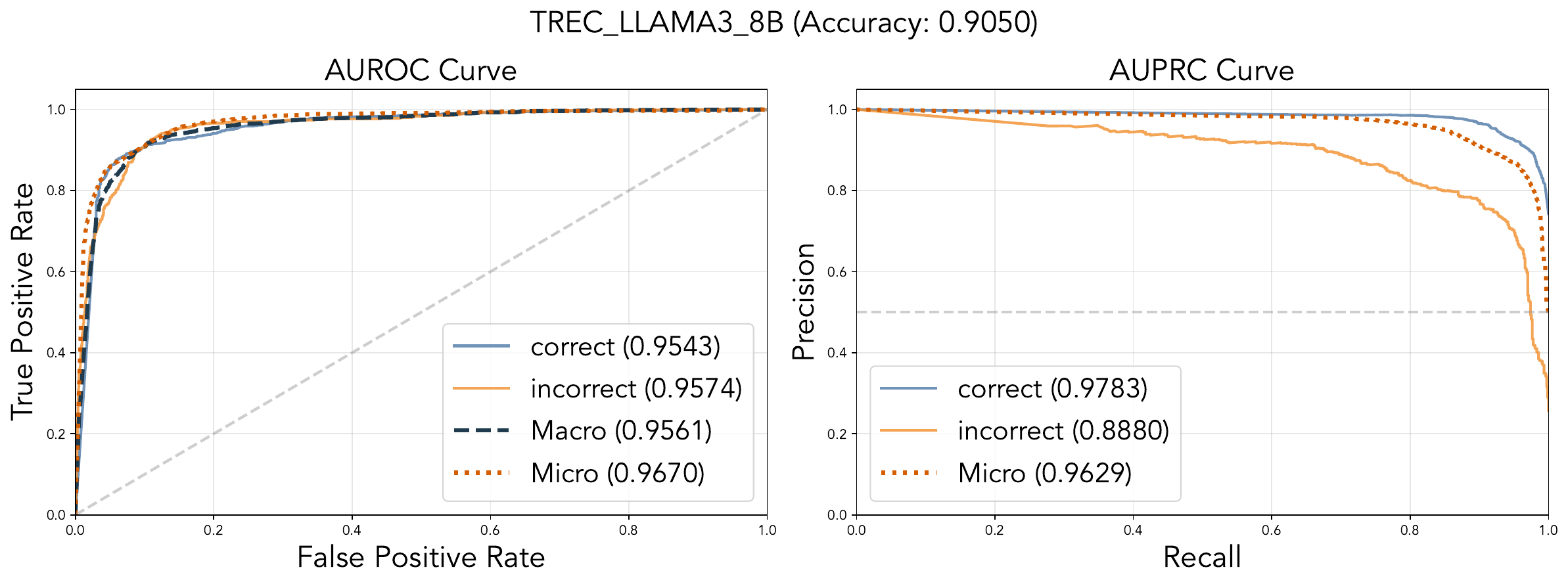}
\caption{Hidden-state probes provide evidence that success and failure form separable clusters in the
representation space (LLaMA3-8B on \textsc{TREC}).}
\label{fig:probe_separability}
\end{figure}

We report AUROC/AUPRC in \Cref{fig:probe_separability} for LLaMA3-8B on \textsc{TREC}. Complete
results for all datasets/backbones are provided in Appendix~\ref{app:probes}.
Overall, the probe achieves strong separability across tasks, supporting the feasibility of success
prediction with a cheaper framework.

\subsection{Difficulty Composition and Success Rates}
\label{sec:experiments:strata}
We report the fraction of test queries assigned to each difficulty level ($l_1/l_2/l_3/l_x$), using
\emph{oracle strata} computed from $\hat\rho(q)$ on held-out data with the target LLMs.
\Cref{tab:stratum_breakdown} shows that difficulty composition varies substantially across datasets and
backbones. For example, \textsc{SST-2} is dominated by $l_1$ queries (92--94\%) with only a small $l_x$ tail
(4--5\%), whereas \textsc{SST-5} has only 43.7\% $l_1$ and a large $l_x$ fraction (38--41\%). \textsc{MNLI}
also exhibits a strong backbone effect, with $l_x$ dropping from 25.8\% (LLaMA3-8B) to 9.9\%
(Qwen2.5-7B). This backbone dependence cautions against treating demonstration selection primarily as a
cross-model generalization mechanism: the difficulty profile is not invariant across target LLMs. 

The table also shows the mean success rate under 1{,}000 random demonstrations for each stratum, confirming
that difficulty correlates strongly with success: $l_1$ strata achieve near-ceiling performance (96--99\%),
while $l_x$ drops to single digits. This validates the operational meaning of our stratification and motivates
allocating larger sampling budgets only to queries most likely to benefit.

\begin{table}[t]
\centering
\small
\sisetup{table-number-alignment=center}
\caption{Oracle difficulty composition (\%) and random-context success rates (\%) on test sets. For each query, we sample 1{,}000 random demonstrations and report mean accuracy within each oracle stratum.}
\label{tab:stratum_breakdown}
\resizebox{\columnwidth}{!}{
\begin{tabular}{@{}ll
                S[table-format=2.1]
                S[table-format=2.1]
                S[table-format=2.1]
                S[table-format=2.1]
                S[table-format=2.1]
                S[table-format=2.1]
                S[table-format=2.1]
                S[table-format=2.1]@{}}
\toprule
\multirow{2}{*}{} & \multirow{2}{*}{\textbf{Dataset}} & \multicolumn{4}{c}{\cellcolor{blue!10}\textbf{Composition (\%)}} & \multicolumn{4}{c}{\cellcolor{orange!10}\textbf{Success rate (\%)}} \\
\cmidrule(lr){3-6} \cmidrule(lr){7-10}
 &  & \textbf{$l_1$} & \textbf{$l_2$} & \textbf{$l_3$} & \textbf{$l_x$} & \textbf{$l_1$} & \textbf{$l_2$} & \textbf{$l_3$} & \textbf{$l_x$} \\
\midrule
\multirow{5}{*}{\textbf{LLaMA}} & \textsc{TREC} & 64.4 & 22.1 & 6.0 & 7.4 & 96.2 & 48.8 & 21.1 & 3.2 \\
 & \textsc{SST-2} & 94.0 & 0.7 & 0.0 & 5.3 & 99.5 & 50.0 & 19.1 & 7.9 \\
 & \textsc{SST-5} & 43.7 & 9.3 & 8.6 & 38.4 & 96.1 & 48.2 & 20.8 & 1.4 \\
 & \textsc{AGNews} & 81.5 & 4.6 & 2.0 & 11.9 & 98.8 & 44.3 & 18.3 & 1.7 \\
 & \textsc{MNLI} & 62.9 & 6.6 & 4.6 & 25.8 & 98.1 & 41.0 & 22.1 & 1.5 \\
\midrule
\multirow{5}{*}{\textbf{Qwen}} & \textsc{TREC} & 73.2 & 14.1 & 4.0 & 8.7 & 97.3 & 56.2 & 20.0 & 2.3 \\
 & \textsc{SST-2} & 92.1 & 2.0 & 1.9 & 4.0 & 99.6 & 53.3 & 21.7 & 1.7 \\
 & \textsc{SST-5} & 43.7 & 11.2 & 4.0 & 41.1 & 98.5 & 55.6 & 22.5 & 1.2 \\
 & \textsc{AGNews} & 78.1 & 6.0 & 2.0 & 13.9 & 99.0 & 48.9 & 17.3 & 0.2 \\
 & \textsc{MNLI} & 84.1 & 3.4 & 2.6 & 9.9 & 98.8 & 55.0 & 19.9 & 2.0 \\
\bottomrule
\end{tabular}
}
\end{table}

\subsection{Auxiliary Predictors: Router and Judges}
\label{sec:experiments:predictors}
We analyze the performance of the router and judges: the router results serve as an indirect indicator, while the precision of judges in identifying correct demonstrations largely determines the final performance.

\paragraph{Router evaluation.}Router accuracy varies substantially across scenarios, ranging from 48.5\% (SST-5, Qwen) to 88.7\%
(SST-2, LLaMA), reflecting that adjacent-stratum boundaries are more ambiguous under some scenarios than
others. Full per-scenario router accuracy results and analysis are reported in Appendix~\ref{app:router_accuracy}.

\paragraph{Judge evaluation.}
Adopting accuracy as a metric to evaluate judges can be misleading, since the objective of DiSP is not to perfectly classify
success/failure for arbitrary $(q,D)$ pairs, but to efficiently discriminate demonstrations that make the target
LLM answer correctly via sample-and-test. In this setting, what matters most is the \emph{precision} of
accepted demonstrations (i.e., how often a judge-accepted context actually yields a correct target response),
while recall is secondary as long as it is not so low that the sampler frequently exhausts its budget.
Accordingly, we focus on judge performance within each difficulty stratum, which is discussed in
\Cref{sec:experiments:nodemo_l1}.

\subsection{Per-stratum Accuracy and Zero-shot Analysis}
\label{sec:experiments:nodemo_l1}

We report per-stratum accuracy of DiSP by the \emph{router-predicted} difficulty level
$\hat\ell=r(q)$. This breakdown reflects how accuracy varies
from easy ($l_1$) to brittle ($l_2/l_3$) and very hard ($l_x$) queries.
As shown in \Cref{tab:pred_stratum_accuracy}, accuracy is consistently high on predicted $l_1$ queries. 
Predicted $l_2$ strata exhibit lower but still substantial accuracy, matching the
intended regime where judging and limited sampling are most beneficial. In contrast, performance
drops sharply for predicted $l_3$ and especially $l_x$, denoting queries for which workable random
demonstrations are rare under the proposal and additional sampling is less likely to pay off. We note that
some extreme values (0 or 100\%) can occur when a predicted stratum has very small support on a given
scenario.
Importantly, accuracies on the harder predicted strata ($l_2/l_3/l_x$) are substantially higher than the
corresponding oracle random-context success rates in \Cref{tab:stratum_breakdown}, providing further
evidence that DiSP can identify helpful contexts
beyond what is achievable by random demonstrations alone.

Furthermore, we analyze a zero-shot variant on $l_1$ queries.
\cref{tab:pred_stratum_accuracy} reports the ratio $R_{\mathrm{zero}/\mathrm{ICL}}=\mathrm{Acc}_{\mathrm{zero}}/\mathrm{Acc}_{\mathrm{ICL}}$
to quantify the impact. Zero-shot recovers most of the ICL performance on routed $l_1$ queries for \textsc{SST-2} ($R\approx 0.99$),
indicating that demonstrations can be skipped with minimal loss in this setting.
However, on \textsc{TREC} and \textsc{SST-5} the gap is larger ($R\approx 0.70$--0.85), suggesting that our
$l_1$ stratum captures queries that are \emph{easy under random ICL contexts} rather than universally easy in
the zero-shot regime.

\begin{table}[t] 
\centering
\small
\sisetup{table-number-alignment=center}
\caption{Per-stratum accuracy (\%) grouped by router-predicted difficulty, with zero-shot analysis for $l_1$ queries. $R_{\mathrm{zero}/\mathrm{ICL}}$ denotes the ratio of zero-shot to DiSP accuracy on $l_1$.}
\label{tab:pred_stratum_accuracy}
\resizebox{\columnwidth}{!}{
\begin{tabular}{@{}ll
                S[table-format=2.1]
                S[table-format=2.1]
                S[table-format=2.1]
                S[table-format=2.1]
                S[table-format=2.1]
                S[table-format=1.3]@{}}
\toprule
\multirow{2}{*}{} & \multirow{2}{*}{\textbf{Dataset}} & \multicolumn{4}{c}{\cellcolor{blue!10}\textbf{DiSP accuracy by stratum}} & \multicolumn{2}{c}{\cellcolor{green!10}\textbf{0-shot on $l_1$}} \\
\cmidrule(lr){3-6} \cmidrule(lr){7-8}
 &  & \textbf{$l_1$} & \textbf{$l_2$} & \textbf{$l_3$} & \textbf{$l_x$} & \textbf{Acc} & \textbf{$R$} \\
\midrule
\multirow{5}{*}{\textbf{LLaMA}} & \textsc{TREC} & 90.4 & 61.5 & 0.0 & 0.0 & 76.9 & 0.851 \\
 & \textsc{SST-2} & 95.7 & 100.0 & 0.0 & 0.0 & 95.0 & 0.993 \\
 & \textsc{SST-5} & 73.9 & 64.7 & 47.3 & 0.0 & 56.5 & 0.765 \\
 & \textsc{AGNews} & 94.3 & 71.9 & 100.0 & 50.0 & 88.7 & 0.940 \\
 & \textsc{MNLI} & 82.1 & 60.3 & 70.4 & 0.0 & 56.4 & 0.688 \\
\midrule
\multirow{5}{*}{\textbf{Qwen}} & \textsc{TREC} & 92.7 & 72.7 & 100.0 & 22.2 & 67.5 & 0.728 \\
 & \textsc{SST-2} & 96.5 & 100.0 & 66.7 & 0.0 & 95.1 & 0.985 \\
 & \textsc{SST-5} & 74.1 & 50.0 & 50.5 & 33.3 & 51.9 & 0.700 \\
 & \textsc{AGNews} & 94.0 & 71.4 & 71.4 & 14.3 & 88.8 & 0.945 \\
 & \textsc{MNLI} & 91.5 & 75.0 & 85.2 & 0.0 & 80.2 & 0.876 \\
\bottomrule
\end{tabular}
}
\end{table}

\subsection{Budgeted Sampling with Random Demonstrations}
\label{sec:experiments:tradeoff}
We characterize how much sampling DiSP actually performs under the deployed, stratum-dependent budgets
$K_{\hat\ell}$ (with $\hat\ell=r(q)$) using two complementary statistics in
\Cref{tab:avg_attempts_by_stratum}: (i) rejection rate, the fraction of queries that exhaust the full
budget $K_{\hat\ell}$ without any candidate being accepted (triggering the fallback), and (ii) average
attempts, the mean number of candidates tested before acceptance, computed only over non-rejected queries.
Overall, acceptance typically happens within a small number of trials for predicted $l_1/l_2$ (often $\approx$1--2
attempts), indicating that the sample-and-test procedure rarely needs to approach the budget on easier routed
queries. In contrast, predicted $l_3$ exhibits higher rejection and/or more attempts (e.g., LLaMA on \textsc{SST-5}:
14.3\% rejection with 4.8 attempts among accepted; Qwen on \textsc{AGNews}: 28.6\% rejection), reflecting that
workable random demonstrations are rarer in this regime. \texttt{N/A} indicates that a stratum has no support for a
given dataset/backbone setting (i.e., no queries are routed to that bucket), so the corresponding statistic is
undefined.

\begin{table}[htb]
\centering
\small
\sisetup{table-number-alignment=center}
\caption{Rejection rate (\%) and average attempts by predicted difficulty. Rej. rate: fraction of queries exhausting budget $K_{\hat\ell}$ without acceptance; Avg. att.: mean attempts before acceptance (non-rejected only).}
\label{tab:avg_attempts_by_stratum}
\resizebox{\columnwidth}{!}{
\begin{tabular}{@{}ll
	            S[table-format=3.1]
	            S[table-format=3.1]
	            S[table-format=3.1]
	            S[table-format=2.1]
	            S[table-format=2.1]
	            S[table-format=2.1]@{}}
\toprule
\multirow{2}{*}{} & \multirow{2}{*}{\textbf{Dataset}} & \multicolumn{3}{c}{\cellcolor{blue!10}\textbf{Rej. rate (\%)}} & \multicolumn{3}{c}{\cellcolor{orange!10}\textbf{Avg. att.}} \\
\cmidrule(lr){3-5} \cmidrule(lr){6-8}
 &  & \textbf{$l_1$} & \textbf{$l_2$} & \textbf{$l_3$} & \textbf{$l_1$} & \textbf{$l_2$} & \textbf{$l_3$} \\
\midrule
\multirow{5}{*}{\textbf{LLaMA}} & \textsc{TREC} & \multicolumn{1}{c}{\texttt{N/A}} & \multicolumn{1}{c}{\texttt{N/A}} & \multicolumn{1}{c}{\texttt{N/A}} & \multicolumn{1}{c}{\texttt{N/A}} & \multicolumn{1}{c}{\texttt{N/A}} & \multicolumn{1}{c}{\texttt{N/A}} \\
 & \textsc{SST-2} & \multicolumn{1}{c}{\texttt{N/A}} & \multicolumn{1}{c}{\texttt{N/A}} & \multicolumn{1}{c}{\texttt{N/A}} & 1.2 & 1.0 & \multicolumn{1}{c}{\texttt{N/A}} \\
 & \textsc{SST-5} & 8.7 & \multicolumn{1}{c}{\texttt{N/A}} & 14.3 & 1.9 & 4.8 & 4.8 \\
 & \textsc{AGNews} & \multicolumn{1}{c}{\texttt{N/A}} & \multicolumn{1}{c}{\texttt{N/A}} & \multicolumn{1}{c}{\texttt{N/A}} & 1.0 & 1.9 & 3.0 \\
 & \textsc{MNLI} & 5.1 & 1.7 & 1.9 & 1.0 & 1.7 & 2.3 \\
\midrule
\multirow{5}{*}{\textbf{Qwen}} & \textsc{TREC} & \multicolumn{1}{c}{\texttt{N/A}} & \multicolumn{1}{c}{\texttt{N/A}} & \multicolumn{1}{c}{\texttt{N/A}} & 1.1 & 2.6 & 9.5 \\
 & \textsc{SST-2} & \multicolumn{1}{c}{\texttt{N/A}} & \multicolumn{1}{c}{\texttt{N/A}} & \multicolumn{1}{c}{\texttt{N/A}} & 1.0 & 1.0 & 2.0 \\
 & \textsc{SST-5} & \multicolumn{1}{c}{\texttt{N/A}} & 3.3 & 1.1 & 1.0 & 2.0 & 3.9 \\
 & \textsc{AGNews} & \multicolumn{1}{c}{\texttt{N/A}} & \multicolumn{1}{c}{\texttt{N/A}} & 28.6 & 1.0 & 6.3 & 3.0 \\
 & \textsc{MNLI} & \multicolumn{1}{c}{\texttt{N/A}} & \multicolumn{1}{c}{\texttt{N/A}} & \multicolumn{1}{c}{\texttt{N/A}} & 1.3 & 2.5 & 2.6 \\
\bottomrule
\end{tabular}
}
\end{table}

\subsection{Risk Tagging: \textsc{HARD\_QUERY} and \textsc{NO\_GOOD\_DEMO}}
\label{sec:experiments:risk_tags}
In \cref{sec:method:inference}, we have defined two diagnostic signals and treat them jointly as a single \emph{risk-tag} for analysis.
\textsc{HARD\_QUERY} flags queries that the router predicts as $l_x$, indicating that additional random
sampling is unlikely to help within the configured budget.
\textsc{NO\_GOOD\_DEMO} flags queries for which none of the sampled contexts is predicted to succeed within the judge budgets.
Across all scenarios, only $2.2\%$ of queries receive these tags.
%Despite this low trigger rate, the tagged subset is overwhelmingly composed of intrinsically hard queries:
$88.3\%$ of risk-tagged queries are oracle-labeled as $l_x$.
This suggests that our risk-tagging is a high-precision indicator of the extreme tail of difficulty, and
points to a simple mechanism for handling high-risk queries in practice (e.g., abstaining/refusing,
requesting clarification, deferring to a stronger model, or escalating to human review) while avoiding
wasted sampling on cases where random ICL is unlikely to succeed.
We emphasize that this diagnostic is intended to be conservative: high precision does not guarantee high recall, and some oracle-$l_x$ queries may remain untagged.

\subsection{Cross-Dataset Training with a Unified Judge}
\label{sec:experiments:cross_dataset}
The judges in DiSP solve a binary prediction problem---whether a sampled query--context pair $(q,D)$ will succeed---and are not tied to a particular label set.
We therefore study whether they can be trained jointly across tasks. For a fixed target backbone, we construct a pooled training set by taking the union of all five datasets.

\Cref{tab:unified_judge_cross_dataset} shows that pooling is largely harmless on easy queries ($l_1$) for LLaMA:
the overall drop is $<1\%$ (91.5$\rightarrow$90.6), suggesting strong transfer in the regime where success is less sensitive to fine-grained dataset structure.
On $l_2$, pooling is mixed: LLaMA drops by 5.2\% overall, with MNLI slightly improving.
The largest degradation appears on $l_3$, driven by SST-5 and MNLI.
These patterns indicate that the hardest stratum is more dataset-specific and benefits from in-domain supervision, while unified training remains a promising direction for reducing the number of judges when scaling DiSP to many tasks.
We note that some $l_3$ subsets are small, so dataset-level differences should be interpreted with caution.

\begin{table}[htbp]
\centering
\small
\setlength{\tabcolsep}{3pt}
\renewcommand{\arraystretch}{1.08}
\sisetup{table-number-alignment=center}
\caption{Cross-dataset training of judges. Entries report accuracy (\%) for \textbf{Sep.} (per-dataset judges) and \textbf{Pool.} (unified judges trained on the union of all datasets). \textbf{All} aggregates test queries pooled across all five datasets within the stratum.}
\label{tab:unified_judge_cross_dataset}
\begin{tabular}{@{}ll
		            S[table-format=3.1] S[table-format=3.1]
		            S[table-format=3.1] S[table-format=3.1]@{}}
\toprule
\multirow{2}{*}{\textbf{Stratum}} & \multirow{2}{*}{\textbf{Dataset}} &
\multicolumn{2}{c}{\textbf{LLaMA3-8B}} & \multicolumn{2}{c}{\textbf{Qwen2.5-7B}} \\
\cmidrule(lr){3-4}\cmidrule(lr){5-6}
 &  & \textbf{Sep.} & \textbf{Pool.} & \textbf{Sep.} & \textbf{Pool.} \\
\midrule
\multirow{6}{*}{$l_1$} & \textsc{TREC} & 90.4 & 90.4 & 92.7 & 91.9 \\
 & \textsc{SST-2} & 95.7 & 95.0 & 96.5 & 96.5 \\
 & \textsc{SST-5} & 73.9 & 65.2 & 74.1 & 74.1 \\
 & \textsc{AGNews} & 94.3 & 93.4 & 94.0 & 94.0 \\
 & \textsc{MNLI} & 82.1 & 82.1 & 91.5 & 90.6 \\
\cmidrule(lr){2-6}
\rowcolor{black!5} & \textbf{All} & 91.5 & 90.6 & 92.8 & 92.4 \\
\addlinespace[1pt]
\midrule
\multirow{6}{*}{$l_2$} & \textsc{TREC} & 61.5 & 58.7 & 72.7 & 72.7 \\
 & \textsc{SST-2} & 100.0 & 100.0 & 100.0 & 100.0 \\
 & \textsc{SST-5} & 64.7 & 58.8 & 50.0 & 56.7 \\
 & \textsc{AGNews} & 71.9 & 65.6 & 71.4 & 57.1 \\
 & \textsc{MNLI} & 60.3 & 62.1 & 75.0 & 68.8 \\
\cmidrule(lr){2-6}
\rowcolor{black!5} & \textbf{All} & 65.7 & 60.5 & 65.7 & 65.7 \\
\addlinespace[1pt]
\midrule
\multirow{6}{*}{$l_3$} & \textsc{TREC} & 0.0 & 0.0 & 100.0 & 100.0 \\
 & \textsc{SST-2} & \multicolumn{1}{c}{--} & \multicolumn{1}{c}{--} & 66.7 & 66.7 \\
 & \textsc{SST-5} & 47.3 & 47.3 & 50.6 & 40.7 \\
 & \textsc{AGNews} & 100.0 & 100.0 & 71.4 & 71.4 \\
 & \textsc{MNLI} & 70.4 & 61.1 & 85.2 & 74.1 \\
\cmidrule(lr){2-6}
\rowcolor{black!5} & \textbf{All} & 55.8 & 52.4 & 61.7 & 53.2 \\
\bottomrule
\end{tabular}
\end{table}

	\section{Conclusion}
\label{sec:conclusion}

We introduced DiSP, a judge-centric framework for demonstration selection in ICL under an explicit
inference budget. DiSP treats selection as feasibility testing: it stratifies queries by difficulty via
offline random-demo trials, routes each query to a compute regime, and applies level-specific judges with
stop-on-acceptance over sampled contexts (with a risk-tagged fallback when no good context is found).
Across five classification benchmarks with Llama~3--8B and Qwen~2.5--7B, DiSP achieves the best average
accuracy while reducing end-to-end wall-clock cost by up to $23\times$. Future work includes improving
calibration/stopping, reducing offline labeling cost, and extending beyond classification.

\section*{Acknowledgements}
This work was supported in part by the National Natural Science Foundation of China [62576126]; 
and the Key R\&D Program of Heilongjiang Province [2023ZX01A11].
\section*{Limitations}
\label{sec:limitations}

\paragraph{Offline labeling cost.}
Our supervision is obtained by running the target LLM (as a labeling oracle) across multiple random
demonstration contexts per training query. While amortized across deployments, this upfront cost can be
non-trivial when scaling to larger datasets, many prompt variants, or many backbones. This challenge is not
unique to DiSP: data-driven demonstration selection methods generally rely on offline data collection and
training.

\paragraph{Dependence on the proposal distribution.}
Judges are trained on $(q,D)$ pairs induced by a specific proposal distribution (random composition in this
study). Changing the proposal, demo pool, or prompt format can shift the $(q,D)$ distribution, degrading
reliability and typically requiring re-collection.

% \paragraph{Scope and transfer.}
% We focus on supervised classification with success defined as label correctness. Extending DiSP to open-ended
% generation requires reliable success definitions and evaluation. Moreover, both the difficulty strata and the
% judges are defined relative to a fixed target backbone and proposal distribution, so transferring across
% backbones may require re-labeling and re-training, and remains an open question.

\section*{Impact Statement}

This paper presents DiSP, which routes queries by difficulty and uses lightweight judges to predict whether a
sampled demonstration context is likely to succeed for a fixed target LLM. By reducing the need to evaluate
many candidate contexts with the target LLM, DiSP can lower test-time latency and computational costs.

However, making ICL more efficient may also facilitate deployment at larger scale, including in harmful
applications. Auxiliary predictors may inherit biases from the target LLM and data, and distribution shift
(e.g., changing the demo pool, proposal distribution, or backbone) can lead to miscalibrated accept/reject
decisions. Practitioners should curate demonstration pools and monitor
for shifts in deployment; diagnostic tags (\textsc{HARD\_QUERY}, \textsc{NO\_GOOD\_DEMO}) can support
abstention or escalation policies but are not guarantees.

	\bibliography{example_paper}
\bibliographystyle{icml2026}

%%%%%%%%%%%%%%%%%%%%%%%%%%%%%%%%%%%%%%%%%%%%%%%%%%%%%%%%%%%%%%%%%%%%%%%%%%%%%%%
%%%%%%%%%%%%%%%%%%%%%%%%%%%%%%%%%%%%%%%%%%%%%%%%%%%%%%%%%%%%%%%%%%%%%%%%%%%%%%%
% APPENDIX
%%%%%%%%%%%%%%%%%%%%%%%%%%%%%%%%%%%%%%%%%%%%%%%%%%%%%%%%%%%%%%%%%%%%%%%%%%%%%%%
%%%%%%%%%%%%%%%%%%%%%%%%%%%%%%%%%%%%%%%%%%%%%%%%%%%%%%%%%%%%%%%%%%%%%%%%%%%%%%%
\newpage
\appendix
\onecolumn
\section{Additional Theoretical Results}
\label{app:theory}

This appendix provides detailed derivations and proofs for the theoretical statements used to motivate our
design choices: (i) how the random-context success rate $\rho(q)$ determines the sampling budget $K$,
(ii) how finite random-demo trials affect the stability of difficulty labels, and (iii) how judge accuracy
translates into guarantees for thresholding, stop-on-acceptance, and risk tags.

\subsection{Coverage of Random Candidates and Budget Selection}
\label{app:theory:coverage}
\paragraph{Setup.}
Fix a query $q$ and consider contexts drawn i.i.d. from a proposal distribution
$D\sim\mathcal{P}_{\mathrm{rand}}$. Recall the ICL success indicator $s(q,D)\in\{0,1\}$. Define the
random-context success rate
\begin{equation}
  \rho(q)\;=\;\Pr_{D\sim\mathcal{P}_{\mathrm{rand}}}\!\left[s(q,D)=1\right].
  \label{eq:rho_true}
\end{equation}

\begin{proposition}[Candidate coverage under random sampling]
\label{prop:coverage}
Let $D_1,\dots,D_K\overset{\mathrm{i.i.d.}}{\sim}\mathcal{P}_{\mathrm{rand}}$. Then
\begin{equation}
  \Pr\!\left(\exists i\le K:\ s(q,D_i)=1\right)\;=\;1-(1-\rho(q))^K.
  \label{eq:coverage_exact}
\end{equation}
\end{proposition}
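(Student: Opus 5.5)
The plan is to pass to the complementary event and use independence. The event that at least one sampled context succeeds is the complement of the event that every sampled context fails, $\bigcap_{i=1}^K\{s(q,D_i)=0\}$. So it suffices to show
\[
\Pr\!\left(\forall i\le K:\ s(q,D_i)=0\right)=(1-\rho(q))^K .
\]

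First, fix $q$ (with its gold label $y^\star$) and the frozen model $M$. Then $s(q,\cdot)$ is a deterministic measurable function of the context $D$, because the prediction rule \eqref{eq:prediction_rule} is a deterministic $\arg\max$ with a fixed tie-breaking convention. Consequently the indicators $Z_i:=s(q,D_i)$ are i.i.d. Bernoulli variables: independence follows because they are functions of the independent $D_i$, and each has success probability $\Pr(Z_i=1)=\rho(q)$ by the definition \eqref{eq:rho_true}.

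Second, independence lets the joint failure probability factor:
\[
\Pr\!\left(\bigcap_{i=1}^K\{Z_i=0\}\right)=\prod_{i=1}^K\Pr(Z_i=0)=(1-\rho(q))^K .
\]
Taking complements then gives \eqref{eq:coverage_exact}.

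There is no substantive obstacle. The one point to state explicitly is that $q$ and $M$ are held fixed, so the only randomness is in the $D_i$. This is what makes $s(q,D_i)$ a function of $D_i$ alone, which the i.i.d. step needs; if the LLM decoding were stochastic, the model's randomness would also have to be independent across calls. I would also note that the identity holds with the true rate $\rho(q)$, whereas the main text uses it heuristically with the estimate $\hat\rho(q)$ in its place.
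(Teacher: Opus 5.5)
Your proof is correct and is the same argument as the paper's: pass to the all-fail event, use independence of the $s(q,D_i)$ to factor it as $(1-\rho(q))^K$, and take complements. Your extra remarks make explicit two points the paper leaves implicit: for fixed $q$ and $M$, $s(q,\cdot)$ is a deterministic function of $D$, which is what makes the indicators independent; and the identity holds for the true rate $\rho(q)$, not for the estimate $\hat\rho(q)$.
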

\begin{proof}
Let $E_i$ be the event $\{s(q,D_i)=0\}$. Since the $D_i$ are i.i.d., the events $E_i$ are independent and
$\Pr(E_i)=1-\rho(q)$. Therefore,
\[
  \Pr\!\left(\forall i\le K,\ s(q,D_i)=0\right)
  \;=\;\prod_{i=1}^K \Pr(E_i)
  \;=\;(1-\rho(q))^K.
\]
Taking complements yields \eqref{eq:coverage_exact}.
\end{proof}

\paragraph{Budgeting for a target miss probability.}
If we want the miss probability to be at most $\delta\in(0,1)$, i.e.,
$\Pr(\forall i\le K,\ s(q,D_i)=0)\le \delta$, then by \Cref{prop:coverage} it suffices to choose
\begin{equation}
  K \;\ge\; \frac{\log \delta}{\log(1-\rho(q))}.
  \label{eq:k_from_rho}
\end{equation}
This expression is well-defined for $\rho(q)\in(0,1)$ since $\log(1-\rho(q))<0$.

\paragraph{Stratum-wise worst-case budgeting.}
Our difficulty thresholds in \Cref{eq:level_def} imply conservative lower bounds on $K$ for each stratum. If
we treat $l_2$ queries as satisfying $\rho(q)\ge \beta$ and $l_3$ queries as satisfying $\rho(q)\ge\gamma$, then
it suffices to choose
\begin{equation}
  K_{l_2}\;\ge\;\frac{\log\delta}{\log(1-\beta)},
  \qquad
  K_{l_3}\;\ge\;\frac{\log\delta}{\log(1-\gamma)}.
  \label{eq:k_worst_case_strata}
\end{equation}
Importantly, these are \emph{coverage} guarantees: they ensure that the sampled candidate set contains at
least one successful context with high probability. Since our system selects contexts using a learned judge
rather than scoring candidates with the task LLM, it must still \emph{select} a good context from the sampled
set; consequently, the end-to-end $K$ required in practice is typically larger.
In our experiments, we use $K_{l_2}=20$ and $K_{l_3}=30$, which makes the worst-case miss probabilities
$(1-\beta)^{K_{l_2}}$ and $(1-\gamma)^{K_{l_3}}$ under the stratum lower bounds.

\subsection{Concentration of $\hat\rho(q)$ and Stability of Difficulty Labels}
\label{app:theory:concentration}
\paragraph{Setup.}
For a fixed query $q$, let $D_1,\dots,D_T\overset{\mathrm{i.i.d.}}{\sim}\mathcal{P}_{\mathrm{rand}}$ and define
\[
  \hat\rho(q)\;=\;\frac{1}{T}\sum_{t=1}^T s(q,D_t).
\]
Then $\hat\rho(q)$ is the sample mean of $T$ i.i.d. Bernoulli random variables with mean $\rho(q)$.

\begin{lemma}[Hoeffding bound for $\hat\rho(q)$]
\label{lem:hoeffding}
For any $\varepsilon>0$,
\begin{equation}
  \Pr\!\left(|\hat\rho(q)-\rho(q)|\ge \varepsilon\right)
  \;\le\;2\exp\!\left(-2T\varepsilon^2\right).
  \label{eq:hoeffding_app}
\end{equation}
\end{lemma}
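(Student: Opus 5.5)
This follows from Hoeffding's inequality applied to the bounded i.i.d. variables $X_t = s(q,D_t)\in\{0,1\}$. Since $D_1,\dots,D_T$ are i.i.d.\ draws from $\mathcal{P}_{\mathrm{rand}}$ and $q$ is fixed, the $X_t$ are i.i.d.\ Bernoulli variables with mean $\mathbb{E}[X_t]=\rho(q)$, by the definition in \eqref{eq:rho_true}. Hence $\hat\rho(q)$ is their sample mean and $\mathbb{E}[\hat\rho(q)]=\rho(q)$.

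\textbf{One-sided tail.} I will prove it via the Chernoff method. For $\lambda>0$, Markov's inequality gives
\[
\Pr\bigl(\hat\rho(q)-\rho(q)\ge\varepsilon\bigr)\le e^{-\lambda\varepsilon}\,\mathbb{E}\bigl[e^{\lambda(\hat\rho(q)-\rho(q))}\bigr].
\]
By independence, the expectation factors as $\prod_{t=1}^T \mathbb{E}\bigl[e^{(\lambda/T)(X_t-\rho(q))}\bigr]$.

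\textbf{Hoeffding's lemma.} This is the only real content. A centered variable supported on an interval of length $1$ satisfies $\mathbb{E}[e^{s(X-\mathbb{E}X)}]\le e^{s^2/8}$. To show it, bound $e^{sx}$ on $[0,1]$ by its chord (convexity). Then check that the resulting log-moment function $\phi(u)$ satisfies $\phi(0)=\phi'(0)=0$ and $\phi''\le 1/4$, and conclude with Taylor's theorem. One could instead cite the lemma as standard.

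\textbf{Optimizing $\lambda$.} With $s=\lambda/T$, the product is at most $e^{\lambda^2/(8T)}$, so the tail is at most $\exp(-\lambda\varepsilon+\lambda^2/(8T))$. Choosing $\lambda=4T\varepsilon$ gives $\exp(-2T\varepsilon^2)$.

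\textbf{Other side and conclusion.} Applying the same argument to $1-X_t$, or equivalently to $-X_t$, gives the identical bound for $\Pr(\rho(q)-\hat\rho(q)\ge\varepsilon)$. A union bound over the two one-sided events then yields the factor $2$ in \eqref{eq:hoeffding_app}.

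The only step with any difficulty is the moment-generating-function bound (Hoeffding's lemma). Everything else is routine, and for $\varepsilon>1$ the probability is trivially zero.
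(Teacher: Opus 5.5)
Your proposal is correct and takes the same route as the paper, whose proof simply invokes Hoeffding's inequality for the i.i.d.\ variables $s(q,D_t)\in[0,1]$. You additionally unpack the standard Chernoff bound plus Hoeffding's lemma derivation (optimized at $\lambda=4T\varepsilon$, followed by a two-sided union bound), which is sound but more than the paper needs.
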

\begin{proof}
This is a direct application of Hoeffding's inequality to bounded variables $s(q,D_t)\in[0,1]$.
\end{proof}

\paragraph{A margin-based mislabel bound.}
The difficulty thresholds $(\alpha,\beta,\gamma)$ in \Cref{eq:level_def} induce three boundary decisions:
$\hat\rho(q)\ge\alpha$ vs.\ $<\alpha$, $\hat\rho(q)\ge\beta$ vs.\ $<\beta$, and $\hat\rho(q)\ge\gamma$ vs.\
$<\gamma$. The following corollary makes explicit that mislabeling probability decays exponentially in $T$
when $\rho(q)$ has margin from the thresholds.

\begin{corollary}[Crossing a threshold is exponentially unlikely with margin]
\label{cor:threshold_crossing}
Fix a threshold $t\in(0,1)$.
\begin{enumerate}
  \item If $\rho(q)\ge t+\Delta$ for some $\Delta>0$, then
  $\Pr(\hat\rho(q)<t)\le \exp(-2T\Delta^2)$.
  \item If $\rho(q)\le t-\Delta$ for some $\Delta>0$, then
  $\Pr(\hat\rho(q)\ge t)\le \exp(-2T\Delta^2)$.
\end{enumerate}
\end{corollary}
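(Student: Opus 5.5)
The plan is to use the one-sided form of Hoeffding's inequality rather than the two-sided bound in \Cref{lem:hoeffding}. The two-sided version would only give $2\exp(-2T\Delta^2)$, which is off by a factor of two. The setup is as in \Cref{app:theory:concentration}: $\hat\rho(q)$ is the mean of $T$ i.i.d.\ variables $s(q,D_t)\in[0,1]$ with mean $\rho(q)$. Hoeffding's inequality in its one-sided form states that for any $\varepsilon>0$,
\[
\Pr\bigl(\hat\rho(q)-\rho(q)\le -\varepsilon\bigr)\le \exp(-2T\varepsilon^2)
\]
and
\[
\Pr\bigl(\hat\rho(q)-\rho(q)\ge \varepsilon\bigr)\le \exp(-2T\varepsilon^2).
\]
If needed, I would justify these by the usual Chernoff argument. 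Apply Markov's inequality to $e^{\lambda\sum_t (s_t-\rho)}$, use Hoeffding's lemma $\mathbb{E}e^{\lambda(s_t-\rho)}\le e^{\lambda^2/8}$ for each factor, and optimize at $\lambda=4\varepsilon$.

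For part 1, assume $\rho(q)\ge t+\Delta$. The event $\{\hat\rho(q)<t\}$ implies
\[
\hat\rho(q)-\rho(q)<t-\rho(q)\le -\Delta,
\]
so $\{\hat\rho(q)<t\}\subseteq\{\hat\rho(q)-\rho(q)\le-\Delta\}$. The lower-tail bound with $\varepsilon=\Delta$ then gives $\exp(-2T\Delta^2)$.

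For part 2, assume $\rho(q)\le t-\Delta$. The event $\{\hat\rho(q)\ge t\}$ implies
\[
\hat\rho(q)-\rho(q)\ge t-\rho(q)\ge\Delta,
\]
and the upper-tail bound with $\varepsilon=\Delta$ gives the same constant.

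The main obstacle is this constant. Invoking \Cref{lem:hoeffding} verbatim loses the factor of $2$, so the proof has to appeal to the one-sided tail directly. Since \Cref{lem:hoeffding} is itself proved by a union of the two one-sided bounds, citing the one-sided version is legitimate. Nothing else is delicate. The inclusions of events use only the margin assumptions, and the hypothesis $t\in(0,1)$ plays no role beyond making the statement meaningful.
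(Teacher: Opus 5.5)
Your proof is correct and follows the paper's argument: the same event inclusion $\{\hat\rho(q)<t\}\subseteq\{\hat\rho(q)-\rho(q)\le-\Delta\}$ followed by a Hoeffding tail bound, with part 2 handled symmetrically. Your explicit appeal to the one-sided Hoeffding inequality is more careful than the paper, which cites the two-sided \Cref{lem:hoeffding} yet states the bound $\exp(-2T\Delta^2)$ without the factor of $2$.
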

\begin{proof}
For (1), $\{\hat\rho(q)<t\}\subseteq\{\hat\rho(q)-\rho(q)\le -\Delta\}$; apply \Cref{lem:hoeffding} to obtain
$\Pr(\hat\rho(q)-\rho(q)\le -\Delta)\le \exp(-2T\Delta^2)$. 
\end{proof}

The proof of (2) is analogous.

\paragraph{Sample complexity for stable stratification.}
As a simple implication, to ensure $\Pr(\hat\rho(q)<t)\le\delta$ for all queries with $\rho(q)\ge t+\Delta$,
it suffices to choose $T\ge \tfrac{1}{2\Delta^2}\log\tfrac{1}{\delta}$. If we want this to hold for
\emph{all} $n$ training queries simultaneously, a union bound yields
$T\ge \tfrac{1}{2\Delta^2}\log\tfrac{n}{\delta}$. These bounds make the role of $T$ explicit: larger $T$
reduces label noise for all queries except those intrinsically close to the decision thresholds (e.g.,
$\rho(q)\approx \alpha$, $\rho(q)\approx \beta$, or $\rho(q)\approx \gamma$), for which any hard stratification is inherently ambiguous.
In our experiments, we use $T=20$ trials per training query as a compute--stability trade-off.

\subsection{Judging Guarantees for Thresholding and Stop-on-Acceptance}
\label{app:theory:judging}
\paragraph{Setup.}
Fix a query $q$ and a candidate set $D_1,\dots,D_K$. Let
$p_i=p(q,D_i)=\Pr(s(q,D_i)=1\mid q,D_i)$ be the true success probability of each candidate and let
$\hat p_i$ be the judge score (e.g., $\hat p_i=g_{\hat\ell}(q,D_i)$). Let $\tau\in(0,1)$ be the acceptance
threshold. Our inference rule scans candidates and \emph{accepts the first} index $i$ for which
$\hat p_i\ge\tau$ and stops (stop-on-acceptance); if no candidate is accepted within the budget, it emits
\textsc{NO\_GOOD\_DEMO} and queries the LLM with a randomly sampled context. The lemmas
below apply to this thresholded, stop-on-acceptance procedure.

\paragraph{Feasibility testing vs.\ ranking.}
This design treats judging as a feasibility test around a decision threshold rather than a full ranking
problem. The threshold $\tau$ trades off two failure modes: setting it too low increases false positives
(accepting contexts that still fail), while setting it too high increases false negatives (rejecting workable
contexts and exhausting the budget).

\paragraph{Soundness of \textsc{NO\_GOOD\_DEMO}.}
Let $\tau\in(0,1)$ be the threshold used by the inference rule in \Cref{alg:single_call_inference}. The next
lemma formalizes the idea that thresholding is robust to small judge error.

\begin{lemma}[\textsc{NO\_GOOD\_DEMO} is sound up to $\varepsilon$]
\label{lem:nogooddemo_sound}
Assume $|\hat p_i-p_i|\le\varepsilon$ for all $i$.
\begin{enumerate}
  \item If $\max_i \hat p_i < \tau$, then $\max_i p_i < \tau+\varepsilon$.
  \item If there exists $j$ with $p_j\ge \tau+\varepsilon$, then $\max_i \hat p_i \ge \tau$.
\end{enumerate}
\end{lemma}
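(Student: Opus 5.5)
Both parts follow directly from the uniform error bound $|\hat p_i - p_i|\le\varepsilon$, applied index by index. No probabilistic reasoning is needed: the assumption is deterministic and holds for all $i$ simultaneously. The two items are essentially contrapositives of each other, so I will prove (1) directly and derive (2) either directly or from (1).

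For item (1), suppose $\max_i \hat p_i < \tau$. Fix any index $i$. The error bound gives $p_i \le \hat p_i + \varepsilon$. Since $\hat p_i \le \max_j \hat p_j < \tau$, we get $p_i < \tau + \varepsilon$. The candidate set is finite ($i\le K$), so the maximum of $p_i$ is attained at some index. That index satisfies the strict inequality, hence $\max_i p_i < \tau+\varepsilon$.

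For item (2), suppose some $j$ has $p_j \ge \tau+\varepsilon$. The lower side of the error bound gives $\hat p_j \ge p_j - \varepsilon \ge \tau$. Therefore $\max_i \hat p_i \ge \hat p_j \ge \tau$. Equivalently, (2) is the contrapositive of (1): if $\max_i \hat p_i<\tau$, then by (1) no $j$ can have $p_j\ge\tau+\varepsilon$. I will present the direct argument and note the equivalence.

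The only point needing care is preserving strictness in (1) when passing to the maximum. This is where finiteness of the candidate set $\{D_1,\dots,D_K\}$ is used; with a supremum over an infinite set, one would only obtain $\le\tau+\varepsilon$. I will also remark on the interpretation. By (2), the stop-on-acceptance rule of \Cref{alg:single_call_inference} always accepts some candidate whenever a candidate with $p_j\ge\tau+\varepsilon$ exists in the scanned set. So \textsc{NO\_GOOD\_DEMO} is emitted only when every sampled candidate has true success probability below $\tau+\varepsilon$, which is the claimed soundness on the sampled set.
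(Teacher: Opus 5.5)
Your proposal is correct and follows the same index-by-index argument as the paper: $p_i \le \hat p_i + \varepsilon < \tau + \varepsilon$ for every $i$ in (1), and $\hat p_j \ge p_j - \varepsilon \ge \tau$ in (2). Your extra remarks are accurate and left implicit in the paper: finiteness of the candidate set is what preserves strictness in (1), and (2) is the contrapositive of (1).
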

\begin{proof}
For (1), for every $i$, $p_i\le \hat p_i+\varepsilon < \tau+\varepsilon$, hence $\max_i p_i<\tau+\varepsilon$.
For (2), pick $j$ with $p_j\ge \tau+\varepsilon$; then $\hat p_j\ge p_j-\varepsilon\ge\tau$, hence
$\max_i \hat p_i\ge \hat p_j\ge\tau$.
\end{proof}

\paragraph{Stop-on-acceptance guarantee.}
\begin{lemma}[Accepted contexts have bounded true success probability]
\label{lem:early_stop}
Assume $|\hat p_i-p_i|\le\varepsilon$ for all $i$ and suppose the inference rule returns some index $i$ with
$\hat p_i\ge\tau$. Then $p_i\ge\tau-\varepsilon$.
\end{lemma}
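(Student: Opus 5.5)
The argument is a one-line consequence of the uniform error assumption, in the same spirit as part (2) of \Cref{lem:nogooddemo_sound}. We first make explicit what ``the inference rule returns index $i$'' means. Under stop-on-acceptance, the returned index $i$ is the first index scanned with $\hat p_i\ge\tau$. All we need from this is the acceptance condition $\hat p_i\ge\tau$ itself. We never use that earlier candidates were rejected.

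Next we apply the error bound to this particular index. The assumption $|\hat p_j-p_j|\le\varepsilon$ holds for every $j\le K$, so it holds for the returned $i$. In one-sided form it gives $p_i\ge\hat p_i-\varepsilon$. Chaining the two inequalities yields
\[
p_i\;\ge\;\hat p_i-\varepsilon\;\ge\;\tau-\varepsilon,
\]
which is the claim.

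There is no real obstacle. The only subtlety is that the returned index is itself random, since it depends on the scores. Because the bound is assumed uniformly over all candidates, it holds at whatever index is selected, and no conditioning or union-bound argument is needed.

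A closing remark can note that the same argument holds for any selection rule that returns an index passing the threshold. For example, it would also cover ``accept the maximum-score candidate if it exceeds $\tau$.'' The guarantee therefore comes from thresholding, not from the scan order.
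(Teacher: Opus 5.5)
Your proposal is correct and matches the paper's proof, which is exactly the one-line chain $p_i\ge\hat p_i-\varepsilon\ge\tau-\varepsilon$. Like yours, it uses only the acceptance condition and the uniform error bound at the returned index.
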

\begin{proof}
Immediate from $p_i\ge \hat p_i-\varepsilon \ge \tau-\varepsilon$.
\end{proof}

\paragraph{Remarks.}
The lemmas above are stated under a deterministic, uniform error condition for clarity and match the
thresholded, stop-on-acceptance procedure used by our system. In practice, judge errors are random and depend on
$(q,D)$; one can obtain high-probability versions by bounding the maximum error over the $K$ sampled
candidates (e.g., via union bounds) or by working with scores and conformal-style guarantees. We
leave such refinements to future work and focus on empirical evaluation.

\subsection{Expected Attempts and Judge Cost}
\label{app:theory:attempts}
Stop-on-acceptance induces a random stopping time that determines test-time cost. Fix a stratum $\ell$ and
consider sequentially sampled candidates $D_1,D_2,\dots$ with judge scores $g_{\ell}(q,D_i)$. Let
\[
  a_{\ell}(q)\;=\;\Pr_{D\sim\mathcal{P}_{\mathrm{rand}}}\!\left(g_{\ell}(q,D)\ge\tau_{\ell}\right)
\]
denote the acceptance rate of judge under the proposal distribution (including both true and false positives).
Ignoring truncation, the number of trials until acceptance is approximately geometric with mean
$1/a_{\ell}(q)$. With a finite budget $K_{\ell}$, the expected number of judge evaluations is
\[
  \mathbb{E}[\#\text{trials}]\;=\;\sum_{i=1}^{K_{\ell}} \Pr(\text{not accepted in the first } i-1 \text{ trials}),
\]
which is upper bounded by $K_{\ell}$ and decreases as $a_{\ell}(q)$ increases. This perspective clarifies
how increasing $K_{\ell}$ improves coverage but raises worst-case cost, while increasing the acceptance
threshold $\tau_{\ell}$ typically lowers $a_{\ell}(q)$ and increases expected attempts.

\section{Hyperparameter Settings and Training Details}
\label{app:hyperparams}
This section summarizes the key hyperparameters used for difficulty stratification and inference; a compact
summary is given in \Cref{tab:hyperparams}.
\paragraph{Difficulty stratification.}
We estimate $\hat\rho(q)$ from $T$ random-demo trials per training query and map $\hat\rho(q)$ to difficulty
levels using fixed thresholds $(\alpha,\beta,\gamma)$.
\paragraph{Inference.}
For routed queries with $\hat\ell\in\{l_1,l_2,l_3\}$, we run stop-on-acceptance with a level-specific judge,
acceptance threshold $\tau_{\hat\ell}$, and sampling budget $K_{\hat\ell}$. We tune $\tau_{l_1},\tau_{l_2},\tau_{l_3}$
on a validation set, and use fixed default budgets unless stated otherwise.
\begin{table}[t]
\centering
\caption{Key hyperparameters used in our experiments.}
\label{tab:hyperparams}
\begin{tabular}{ll}
\toprule
\textbf{Component} & \textbf{Setting} \\
\midrule
ICL shots & $k=\lvert\mathcal{Y}\rvert$ (one demo per class) \\
Random-demo trials ($\hat\rho$) & $T=20$ per training query \\
Difficulty thresholds & $\alpha=0.75,\beta=0.3,\gamma=0.1$\\
Inference budgets & $K_{l_1}=10$, $K_{l_2}=20$, $K_{l_3}=30$  \\
Acceptance thresholds & tune $\tau_{l_1},\tau_{l_2},\tau_{l_3}$ on validation sets under each scenario \\
\bottomrule
\end{tabular}
\end{table}

Unless stated otherwise, we use the same default inference budgets and acceptance thresholds for all datasets
under a fixed backbone.

\paragraph{Router and judge training procedure.}
We train the router and judges using the same offline random-demo trials used for difficulty stratification.
Concretely: (i) for each training query $q$, sample $T$ class-balanced demo contexts
$D_1,\dots,D_T\sim\mathcal{P}_{\mathrm{rand}}$ and label each trial with $s(q,D_t)$ using the target LLM;
(ii) compute $\hat\rho(q)=\tfrac{1}{T}\sum_{t=1}^T s(q,D_t)$ and assign $\ell(q)\in\{l_1,l_2,l_3,l_x\}$ via
thresholds $(\alpha,\beta,\gamma)$; (iii) form $\mathcal{D}_{\mathrm{route}}=\{(q,\ell(q))\}$ and
$\mathcal{D}_{\ell}=\{((q,D_t),s(q,D_t)):\ell(q)=\ell\}$ for $\ell\in\{l_1,l_2,l_3\}$; and (iv) train each judge
as a binary classifier on $\mathcal{D}_{\ell}$. Although the router predicts a 4-way label, we train it using
three binary (ordinal) objectives with a shared encoder: $h_1$ distinguishes $l_1$ vs.\ $\{l_2,l_3,l_x\}$,
$h_2$ distinguishes $\{l_1,l_2\}$ vs.\ $\{l_3,l_x\}$, and $h_3$ distinguishes $\{l_1,l_2,l_3\}$ vs.\ $\{l_x\}$. At
test time, we reconstruct $\hat\ell$ by applying these classifiers in order (e.g., predict $l_x$ if $h_3$
rejects, otherwise predict $l_3$ if $h_2$ rejects, otherwise predict $l_2$ if $h_1$ rejects, else $l_1$); we
find this hierarchical training improves routing performance. At inference, we tune
$\tau_{l_1},\tau_{l_2},\tau_{l_3}$ on a validation set and apply stop-on-acceptance with $(K_{\ell},\tau_{\ell})$.

\subsection{Router Micro-Accuracy by Scenario}
\label{app:router_accuracy}
We report router micro-accuracy as the fraction of test queries whose predicted difficulty stratum matches the
oracle stratum. \Cref{tab:router_micro_accuracy} summarizes the per-scenario results.

\paragraph{Relation to end-task accuracy.}
Router micro-accuracy is only \emph{indirectly} related to the final task accuracy. The router does not output
the task label; instead it selects an inference \emph{policy} (difficulty stratum), which determines which
judge(s) and hyperparameters are used (e.g., stop-on-acceptance $(K_{\ell},\tau_{\ell})$). Consequently, the
final accuracy depends not only on whether the router matches the oracle stratum, but also on (i) the confusion structure 
(which strata it confuses) of the router
 and (ii) how robust each stratum-specific policy is when applied
to queries originating from other strata. Equivalently, with fixed base model and judges, the final accuracy
can be viewed as an oracle-stratum-weighted average of the accuracies achieved under the \emph{routed} policy;
routing errors only hurt when they send queries to materially mismatched policies (e.g., under-budgeting truly
hard queries or using thresholds that prune too aggressively).

Importantly, a \emph{low} router micro-accuracy does not necessarily imply a \emph{low} final task accuracy.
First, micro-accuracy is a strict metric: many ``errors'' are near stratum boundaries, where adjacent strata
often induce very similar inference policies, making such misroutes largely benign. Second, the downstream
decision procedure is tolerant to mild policy mismatch: stop-on-acceptance can still recover by rejecting early
candidates and evaluating later ones. Third, the mapping from stratum to policy is typically designed to be
safe---e.g., allocating \emph{more} budget or using slightly more permissive thresholds rarely hurts accuracy
as much as it increases cost, so some misroutes mainly affect efficiency rather than correctness. Therefore, the
router primarily impacts final accuracy when misrouting is systematic and strongly mismatched (e.g., hard$\rightarrow$easy
causing consistent under-budgeting or overly strict pruning); otherwise, the dominant accuracy bottleneck may lie
in judge quality or the base model itself.

\begin{table}[t]
\centering
\small
\setlength{\tabcolsep}{6pt}
\renewcommand{\arraystretch}{1.05}
\sisetup{table-number-alignment=center}
\caption{Router micro-accuracy (oracle strata) for each dataset. Bold indicates the higher micro-accuracy between the two backbones.}
\label{tab:router_micro_accuracy}
\begin{tabular}{@{}l
                S[table-format=2.1]
                S[table-format=2.1]@{}}
\toprule
\textbf{Dataset} & {\textbf{LLaMA3-8B}} & {\textbf{Qwen2.5-7B}} \\
\midrule
\textsc{TREC} & 77.2 & 80.5 \\
\textsc{SST-2} & 88.7& 86.8 \\
\textsc{SST-5} & 53.8 & 48.5 \\
\textsc{AGNews} & 73.5 & 79.5 \\
\textsc{MNLI} & 58.5 & 66.2 \\
\bottomrule
\end{tabular}
\end{table}

\subsection{Baseline Timing Breakdown}
\label{app:timing_baselines}
\cref{tab:baseline_time_full} provides the per-dataset wall-clock cost for baseline demonstration selection methods used in
our comparisons, as well as the end-to-end time of our DiSP pipeline (mean over LLaMA3-8B and Qwen2.5-7B). Train time
sums the \emph{score} and \emph{train} stages (offline). Test time sums the \emph{retrieve} stage and the target-LLM
inference time.

\begin{table*}[t]
\centering
\small
\caption{Full timing breakdown (formatted as \texttt{XmYYs}) across datasets.}
\label{tab:baseline_time_full}
\begin{tabular}{llccccc}
\toprule
\textbf{Method} & \textbf{Metric} & \textbf{TREC} & \textbf{AGNews} & \textbf{SST-2} & \textbf{SST-5} & \textbf{MNLI} \\
\midrule
Uprise & Train time & 13m21s & 16m16s & 9m28s & 14m26s & 13m39s \\
Uprise & Test time  & 0m24s & 0m29s & 0m16s & 0m24s & 0m24s \\
\midrule
Se$^2$ & Train time & 110m29s & 127m02s & 96m36s & 121m21s & 118m47s \\
Se$^2$ & Test time  & 0m34s & 0m38s & 0m19s & 0m38s & 0m36s \\
\midrule
SeDPO & Train time & 130m12s & 177m04s & 150m27s & 156m16s & 171m27s \\
SeDPO & Test time  & 0m29s & 0m35s & 0m22s & 0m35s & 0m44s \\
\midrule
DiSP & Train time & 6m52s & 6m22s & 5m36s & 6m31s & 8m08s \\
DiSP & Test time  & 0m04s & 0m05s & 0m04s & 0m05s & 0m05s \\
\bottomrule
\end{tabular}
\end{table*}

\section{Representation Probe Results}
\label{app:probes}
This appendix reports the complete results for the last-layer MLP probes introduced in \Cref{sec:experiments:probes}. 
We report AUROC/AUPRC for success prediction on prompt instances $x=(q,D)$
across all datasets and target LLM backbones used in the experiments in \cref{fig:probe_curves_trec,fig:probe_curves_sst2,fig:probe_curves_sst5,fig:probe_curves_agnews,fig:probe_curves_mnli}, 
which visualize the full ROC and PR curves for each dataset/backbone setting. Each plot shows one-vs-rest curves
for each label, together with micro-/macro-averaged curves; the diagonal dashed line in ROC corresponds to
random guessing and the horizontal dashed line in PR indicates the positive-class prevalence.

% \clearpage
\begin{figure*}[p]
\centering
\begin{subfigure}[t]{0.49\textwidth}
  \centering
  \includegraphics[width=\linewidth]{Figures/trec_llama3_8b_curves.pdf}
  \caption{LLaMA3-8B}
\end{subfigure}\hfill
\begin{subfigure}[t]{0.49\textwidth}
  \centering
  \includegraphics[width=\linewidth]{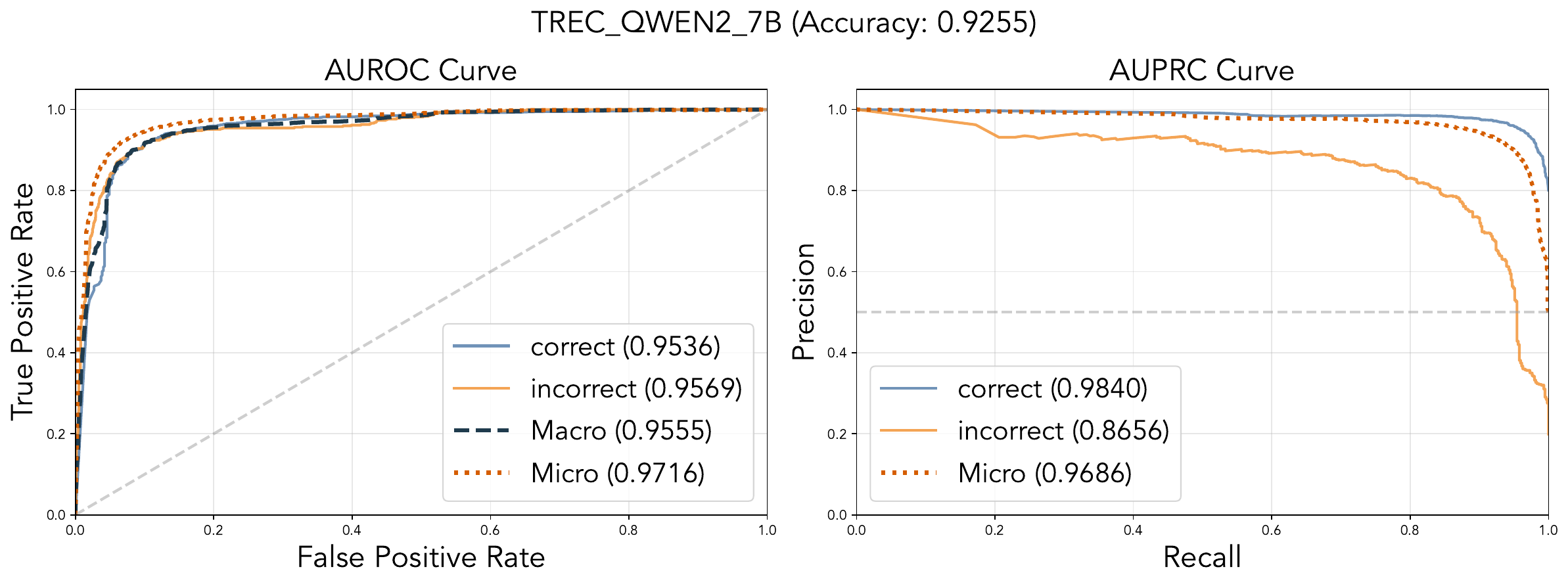}
  \caption{Qwen2.5-7B}
\end{subfigure}
\caption{Last-layer MLP probe ROC/PR curves for success prediction on \textsc{TREC}.}
\label{fig:probe_curves_trec}
\end{figure*}

\begin{figure*}[p]
\centering
\begin{subfigure}[t]{0.49\textwidth}
  \centering
  \includegraphics[width=\linewidth]{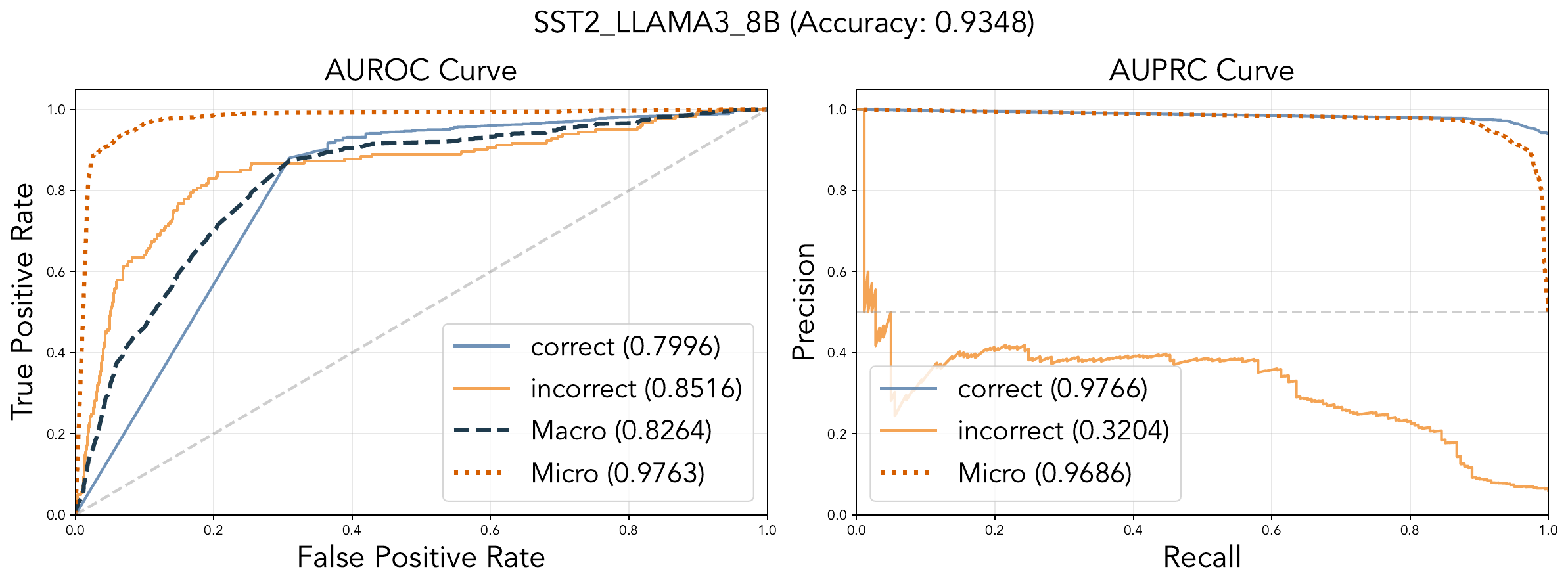}
  \caption{LLaMA3-8B}
\end{subfigure}\hfill
\begin{subfigure}[t]{0.49\textwidth}
  \centering
  \includegraphics[width=\linewidth]{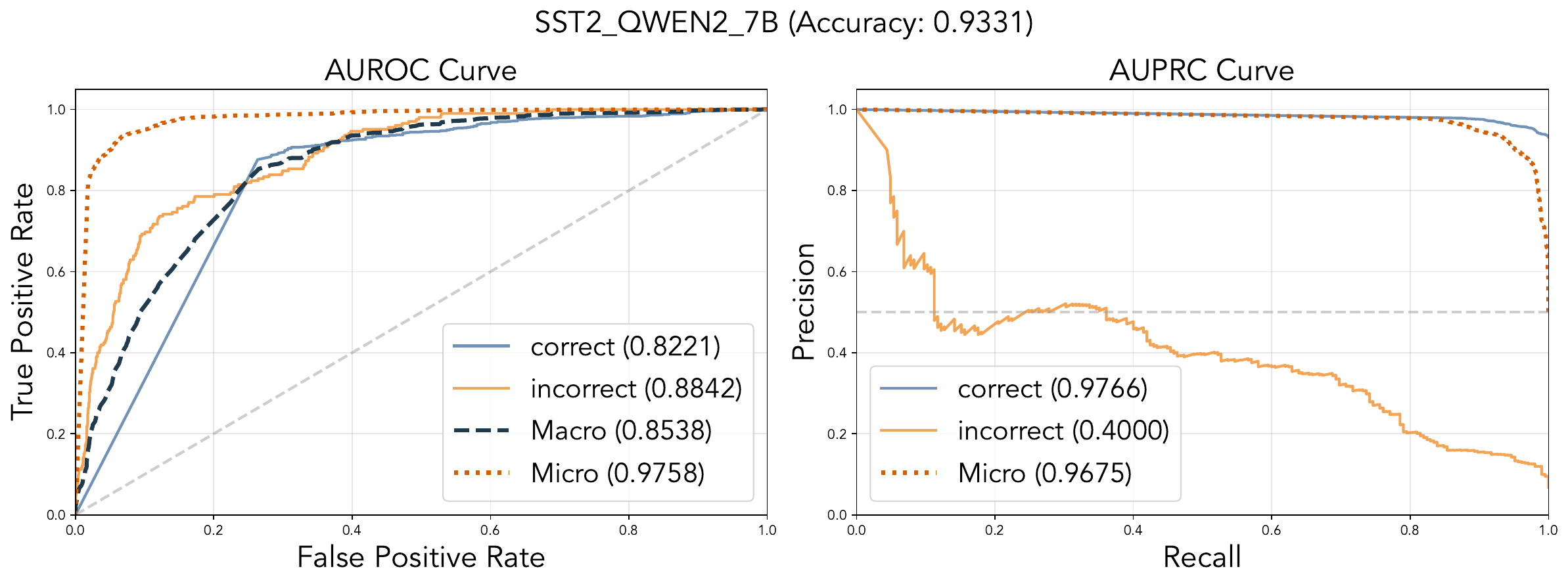}
  \caption{Qwen2.5-7B}
\end{subfigure}
\caption{Last-layer MLP probe ROC/PR curves for success prediction on \textsc{SST-2}.}
\label{fig:probe_curves_sst2}
\end{figure*}

\begin{figure*}[p]
\centering
\begin{subfigure}[t]{0.49\textwidth}
  \centering
  \includegraphics[width=\linewidth]{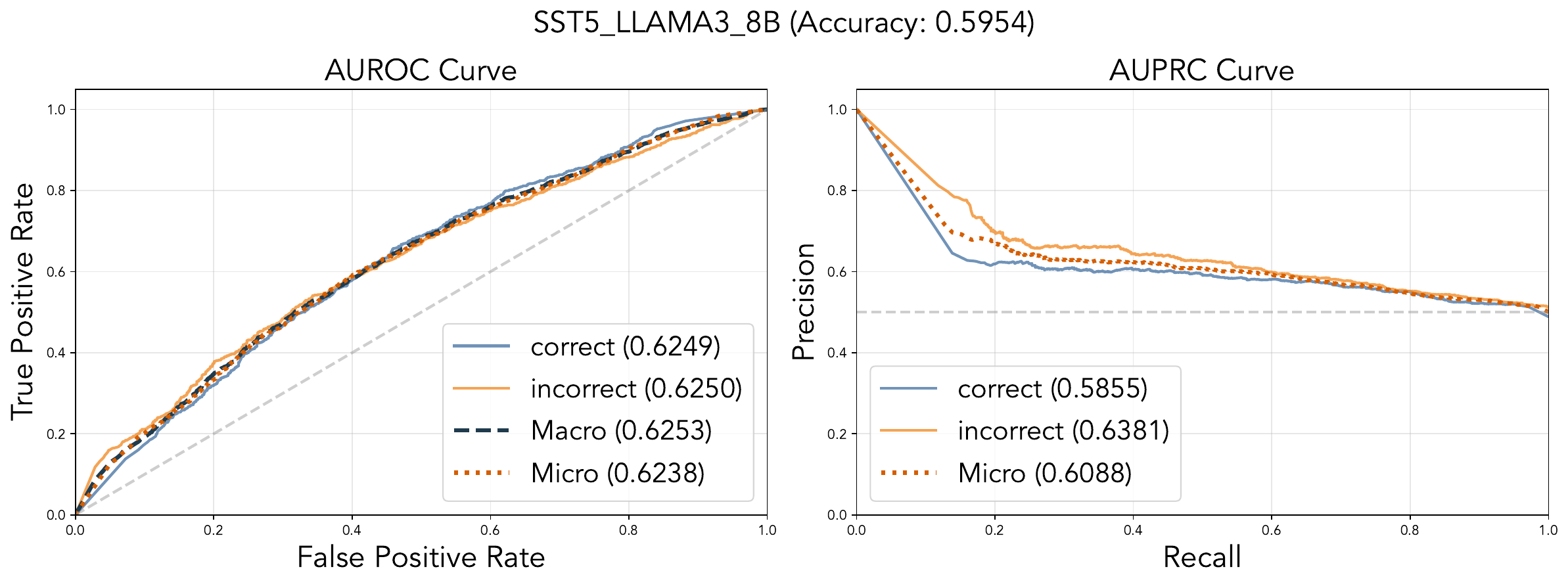}
  \caption{LLaMA3-8B}
\end{subfigure}\hfill
\begin{subfigure}[t]{0.49\textwidth}
  \centering
  \includegraphics[width=\linewidth]{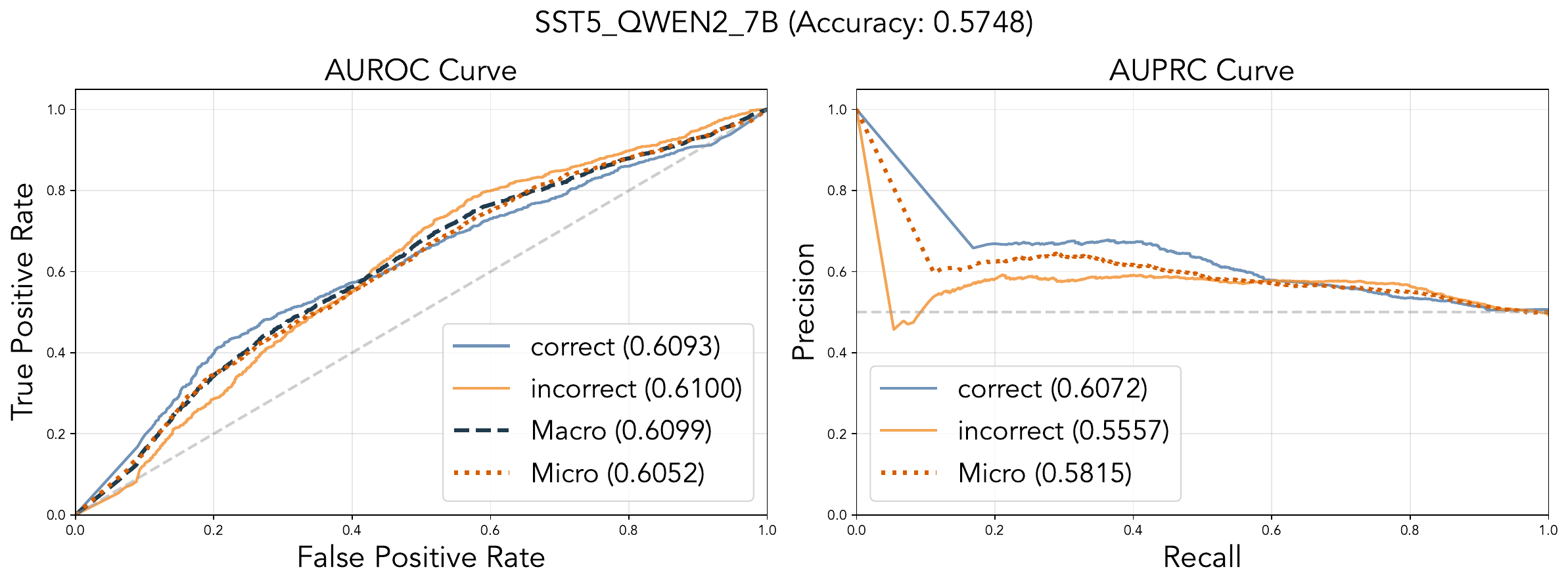}
  \caption{Qwen2.5-7B}
\end{subfigure}
\caption{Last-layer MLP probe ROC/PR curves for success prediction on \textsc{SST-5}.}
\label{fig:probe_curves_sst5}
\end{figure*}

\begin{figure*}[p]
\centering
\begin{subfigure}[t]{0.49\textwidth}
  \centering
  \includegraphics[width=\linewidth]{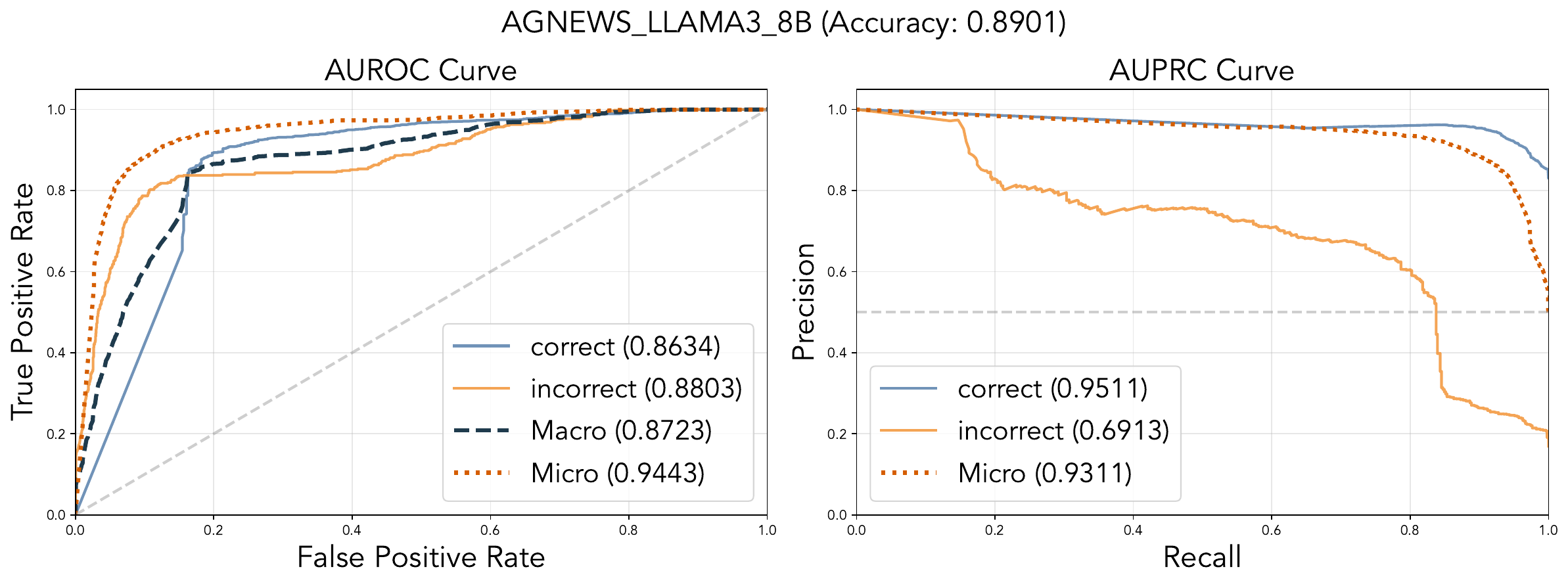}
  \caption{LLaMA3-8B}
\end{subfigure}\hfill
\begin{subfigure}[t]{0.49\textwidth}
  \centering
  \includegraphics[width=\linewidth]{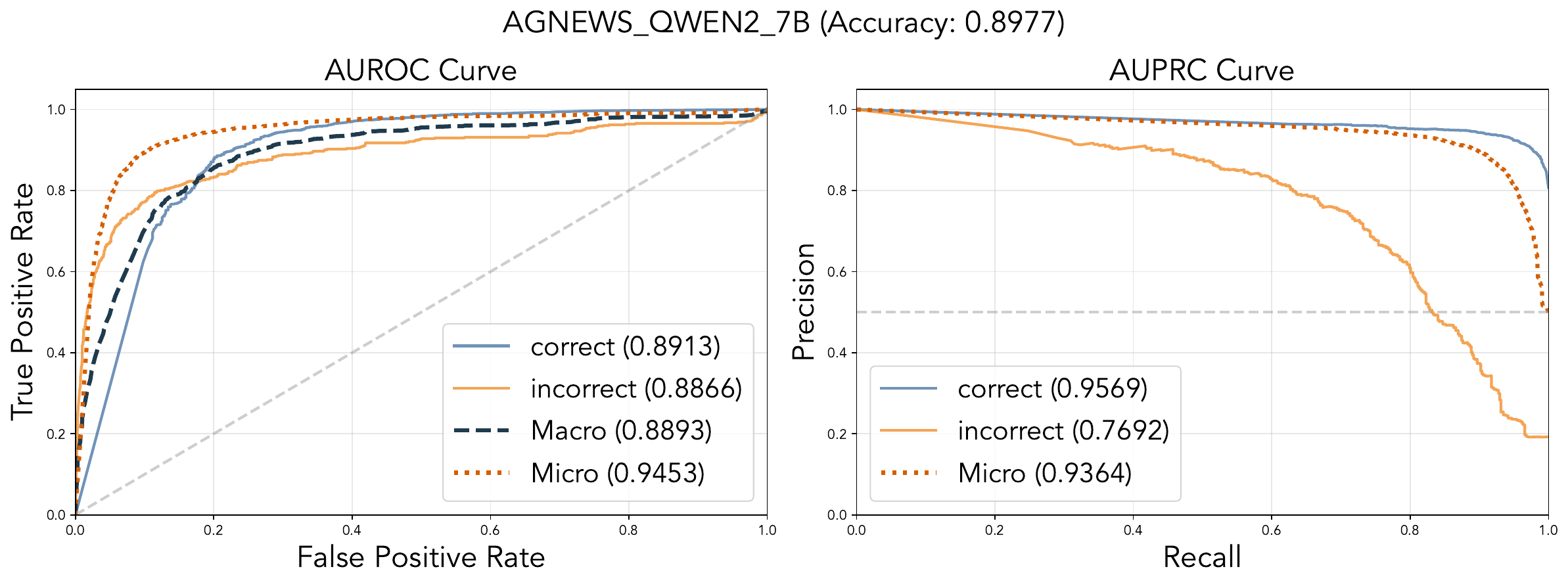}
  \caption{Qwen2.5-7B}
\end{subfigure}
\caption{Last-layer MLP probe ROC/PR curves for success prediction on \textsc{AGNews}.}
\label{fig:probe_curves_agnews}
\end{figure*}

\begin{figure*}[p]
\centering
\begin{subfigure}[t]{0.49\textwidth}
  \centering
  \includegraphics[width=\linewidth]{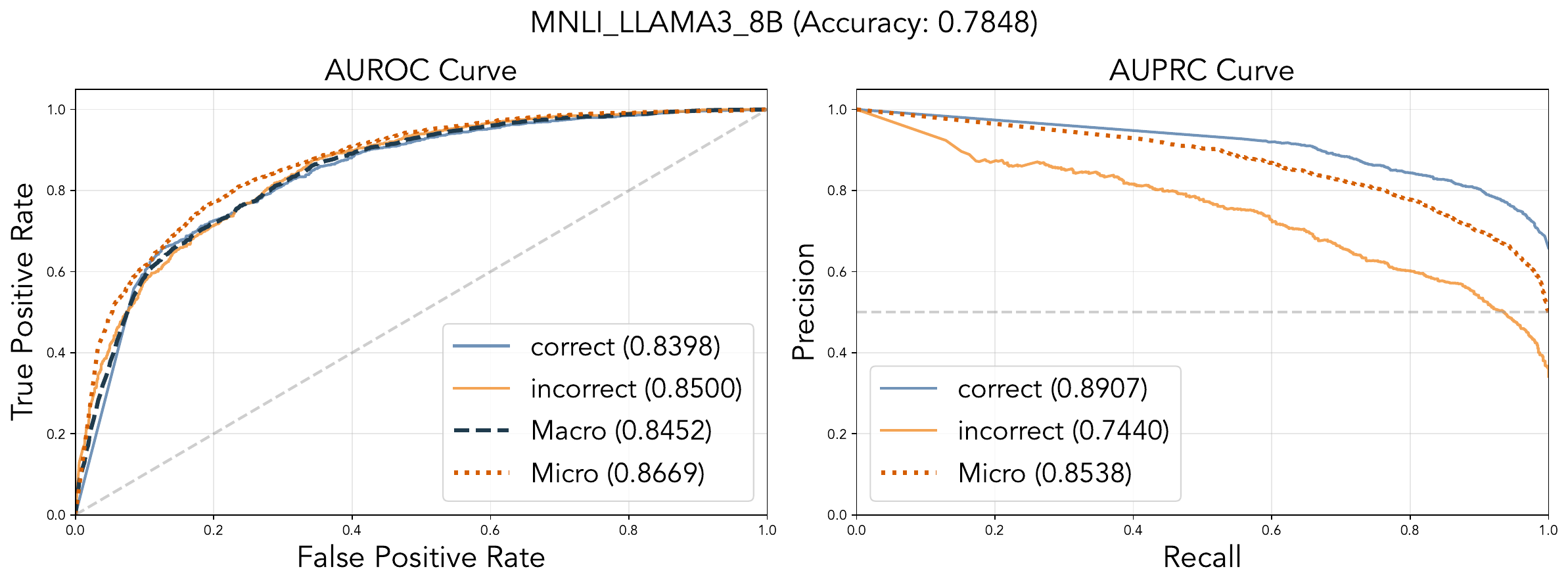}
  \caption{LLaMA3-8B}
\end{subfigure}\hfill
\begin{subfigure}[t]{0.49\textwidth}
  \centering
  \includegraphics[width=\linewidth]{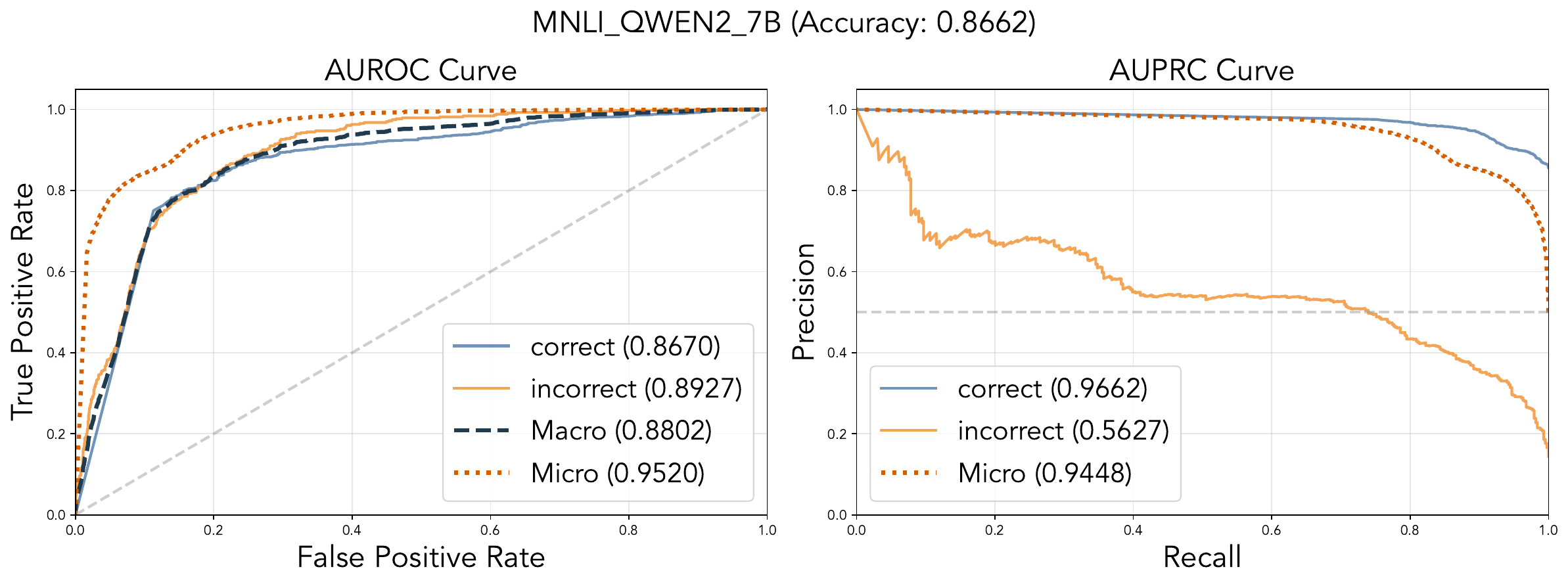}
  \caption{Qwen2.5-7B}
\end{subfigure}
\caption{Last-layer MLP probe ROC/PR curves for success prediction on \textsc{MNLI}.}
\label{fig:probe_curves_mnli}
\end{figure*}

\section{Prompt Templates and Label Verbalizers}
\label{app:prompts}
We provide the prompt templates and label verbalizers for each dataset used in our experiments.

\subsection{General Format}
We follow a repeated-instruction format. Each example (a labeled demonstration or the unlabeled query)
appears as a \emph{block} that contains (i) the dataset-specific instruction, (ii) the input fields, and
(iii) a label line that begins with a dataset-specific key (e.g., \texttt{Answer:}, \texttt{Category:}, or
\texttt{Relationship:}). For labeled blocks, we place the gold label verbalizer after the key; for the
query block, we leave the label blank and score the fixed set of label options. A $k$-shot context
concatenates $k$ labeled blocks (one example per class, in a random order) followed by the query block,
which ends with the label key and an empty value. At evaluation time, the model is scored over the label
verbalizers listed below and we take the most likely label string.

\subsection{\textsc{TREC} (6-way question classification)}
\textbf{Label verbalizers:} \texttt{Abbreviation}, \texttt{Entity}, \texttt{Description}, \texttt{Human},
\texttt{Location}, \texttt{Numeric}.
\begin{verbatim}
You are an expert question classifier. Your task is to carefully classify
the given question into one of the following six categories:
Abbreviation, Entity, Description, Human, Location, or Numeric.

Question: <QUESTION>
Category:
\end{verbatim}

\subsection{\textsc{SST-2} (binary sentiment classification)}
\textbf{Label verbalizers:} \texttt{Negative}, \texttt{Positive}.
\begin{verbatim}
You are an expert sentiment analyst. Classify the sentiment of the following
sentence carefully into one of these two categories: negative or positive.

Sentence: <SENTENCE>
Answer:
\end{verbatim}

\subsection{\textsc{SST-5} (5-way sentiment classification)}
\textbf{Label verbalizers:} \texttt{very negative}, \texttt{negative}, \texttt{neutral}, \texttt{positive},
\texttt{very positive}.
\begin{verbatim}
You are an expert sentiment analyst.Classify the sentiment of the following
sentence carefully into one of these five categories:
very negative, negative, neutral, positive, or very positive.

Sentence: <SENTENCE>
Answer:
\end{verbatim}

\subsection{\textsc{AGNews} (4-way topic classification)}
\textbf{Label verbalizers:} \texttt{World}, \texttt{Sports}, \texttt{Business}, \texttt{Science/Technology}.
\begin{verbatim}
You are an expert news classifier. Your task is to carefully classify the given
news article into one of the following four categories:
World, Sports, Business, or Science/Technology.

Article: <ARTICLE>
Category:
\end{verbatim}

\subsection{\textsc{MNLI} (3-way natural language inference)}
\textbf{Label verbalizers:} \texttt{Entailment}, \texttt{Neutral}, \texttt{Contradiction}.
\begin{verbatim}
You are an expert in natural language inference.
Your task is to carefully classify the relationship between a given
premise and a hypothesis into one of the following three categories:
Entailment, Neutral, or Contradiction.

Premise: <PREMISE>, Hypothesis: <HYPOTHESIS>
Relationship:
\end{verbatim}

\subsection{Complete Zero-shot and Few-shot Prompt Examples}
We show end-to-end prompts formed by concatenating blocks in our repeated-instruction format.

\paragraph{Zero-shot (\textsc{MNLI}).}
\begin{verbatim}
You are an expert in natural language inference.
Your task is to carefully classify the relationship between a given
premise and a hypothesis into one of the following three categories:
Entailment, Neutral, or Contradiction.

Premise: A chef is making dinner,Hypothesis: Someone is cooking.
Relationship:
\end{verbatim}

\paragraph{Few-shot (\textsc{MNLI}, 3-shot).}
\begin{verbatim}
You are an expert in natural language inference.
Your task is to carefully classify the relationship between a given
premise and a hypothesis into one of the following three categories:
Entailment, Neutral, or Contradiction.

Premise: A man rides a bicycle,Hypothesis: A person rides a bike.
Relationship: Entailment

You are an expert in natural language inference.
Your task is to carefully classify the relationship between a given
premise and a hypothesis into one of the following three categories:
Entailment, Neutral, or Contradiction.

Premise: A woman reads a book,Hypothesis: The woman is drinking coffee.
Relationship: Neutral

You are an expert in natural language inference.
Your task is to carefully classify the relationship between a given
premise and a hypothesis into one of the following three categories:
Entailment, Neutral, or Contradiction.

Premise: Two dogs play outside,Hypothesis: No animals are outdoors.
Relationship: Contradiction

You are an expert in natural language inference.
Your task is to carefully classify the relationship between a given
premise and a hypothesis into one of the following three categories:
Entailment, Neutral, or Contradiction.

Premise: A chef is making dinner,Hypothesis: Someone is cooking.
Relationship:
\end{verbatim}

%%%%%%%%%%%%%%%%%%%%%%%%%%%%%%%%%%%%%%%%%%%%%%%%%%%%%%%%%%%%%%%%%%%%%%%%%%%%%%%
%%%%%%%%%%%%%%%%%%%%%%%%%%%%%%%%%%%%%%%%%%%%%%%%%%%%%%%%%%%%%%%%%%%%%%%%%%%%%%%

\end{document}